\newcommand{\eqn}[1]{\begin{align}#1\end{align}}
\newcommand{\eq}[1]{\begin{align*}#1\end{align*}}
\newcommand{\proofof}{Proof of }
\newcommand{\proofend}{}
\def\subsubsect#1{\vspace{1ex plus 0.5ex minus 0.5ex}\noindent{\bf\boldmath{#1.}}}
\renewcommand{\v}[1]{{\boldsymbol #1}}
\newcommand{\argmax}{\operatornamewithlimits{arg\,max}}
\newcommand{\ind}[1]{[\![ #1 ]\!]}
\newcommand{\R}[0]{\mathbb R}
\newcommand{\N}[0]{\mathbb N}
\theoremstyle{plain}
\newtheorem{theorem}{Theorem}
\newtheorem{lemma}[theorem]{Lemma}
\newtheorem{proposition}[theorem]{Proposition}
\theoremstyle{definition}
\newtheorem{definition}[theorem]{Definition}
\newtheorem{assumption}[theorem]{Assumption}
\newtheorem{example}[theorem]{Example}
\theoremstyle{remark}
\newenvironment{keywords}{\centerline{\bf\small
Keywords}\begin{quote}\small}{\par\end{quote}\vskip 1ex}
\newcommand{\tpic}[1]{
\tikzstyle{state} = [circle,draw,minimum width=0.5cm, minimum height=0.5cm]
\begin{tikzpicture}[->,>=stealth',shorten >=1pt,auto,node distance=\nodedist, semithick]
\scriptsize
{
#1
}
\end{tikzpicture}
}
\renewcommand{\H}[0]{\mathcal H}
\renewcommand{\O}[0]{\mathcal O}
\newcommand{\A}[0]{\mathcal A}
\renewcommand{\Re}[0]{\mathcal R}
\newcommand{\Rw}[0]{{\boldsymbol{R}}}
\renewcommand{\S}[0]{\ensuremath{\mathcal S}}
\renewcommand{\d}[0]{{\mathbf d}}
\newcommand{\du}[1]{{\boldsymbol{d}^{#1}}}
\newcommand{\dt}[2]{{d_{#2}^{#1}}}
\newcommand{\tdu}[1]{{\boldsymbol{\tilde d}^{#1}}}
\newcommand{\D}[2]{{D_{#1,#2}}}
\newcommand{\nodedist}[0]{1.2cm}
\newcommand{\shortnodedist}[0]{0.70cm}
\newcommand{\start}{{\tiny \bfseries $\mathcal S$}}
\begin{document}

\author{Tor Lattimore \and Marcus Hutter}

\title{
\vskip 2mm\bf\Large\hrule height5pt \vskip 4mm
Time Consistent Discounting
\vskip 4mm \hrule height2pt}
\author{{\bf Tor Lattimore$^1$} and {\bf Marcus Hutter$^{1,2}$}\\[3mm]
\normalsize Research School of Computer Science \\[-0.5ex] 
\normalsize $^1$Australian National University \hspace{1cm} $^2$ETH Z{\"u}rich \\[-0.5ex]
\normalsize\texttt{\{tor.lattimore,marcus.hutter\}@anu.edu.au}
}
\date{15 July 2011}

\maketitle

\begin{abstract}
A possibly immortal agent tries to maximise its summed discounted rewards over time, where discounting is used to avoid infinite
utilities and encourage the agent to value current rewards more than future ones. Some commonly used discount functions lead
to time-inconsistent behavior where the agent changes its plan over time. These inconsistencies can lead to very poor behavior.
We generalise the usual discounted utility model to one where the discount function changes with the age of the agent. We then
give a simple characterisation of time-(in)consistent discount functions and show the existence of a rational policy for an
agent that knows its discount function is time-inconsistent.
\def\contentsname{\centering\normalsize Contents}
{\parskip=-2.7ex\tableofcontents}
\end{abstract}
\begin{keywords} 
Rational agents;
sequential decision theory;
general discounting;
time-consistency;
game theory.
\end{keywords}

\newpage
\section{Introduction}

The goal of an agent is to maximise its expected utility; but how do we measure utility? One method is to
assign an instantaneous reward to particular events, such as having a good meal, or a pleasant walk. It would be
natural to measure the utility of a plan (policy) by simply summing the expected instantaneous rewards,
but for immortal agents this may lead to infinite utility and also assumes
rewards are equally valuable irrespective of the time at which they are received.

One solution, the discounted utility (DU) model introduced by
Samuelson in \cite{Sam37}, is to take a weighted sum of the rewards with earlier rewards usually valued more than
later ones.

There have been a number of criticisms of the DU model, which we will not discuss. For an excellent
summary, see \cite{FOO02}. Despite the criticisms, the DU model is widely used in both economics and computer science.

A discount function is time-inconsistent if plans chosen to maximise expected discounted utility change over time. For
example, many people express a preference for \$110 in 31 days over \$100 in 30 days, but reverse that preference 30 days later
when given a choice between \$110 tomorrow or \$100 today \cite{FGM94}. This behavior can be caused by a rational agent with
a time-inconsistent discount function.

Unfortunately, time-inconsistent discount functions can lead to extremely bad behavior and so it becomes important to
ask what discount functions are time-inconsistent.

Previous work has focussed on a continuous model where agents can take actions at any time in a continuous time-space.
We consider a discrete model where agents act in finite time-steps. In general this is not a limitation since any
continuous environment can be approximated arbitrarily well by a discrete one. The discrete setting has the
advantage of easier analysis, which allows us
to consider a very general setup where environments are arbitrary finite or infinite Markov decision processes.

Traditionally, the DU model has assumed a sliding discount function. Formally, a sequence of instantaneous
utilities (rewards)
$R = (r_k, r_{k+1}, r_{k+2}, \cdots)$ starting at time $k$, is given utility equal to $\sum_{t=k}^\infty \dt{}{t-k} r_{t}$ where
$\du{} \in  [0, 1]^\infty$.
We generalise this model as in \cite{Hut06} by allowing the discount function to depend on the age of the agent.
The new utility is given by
$\sum_{t=k}^\infty \dt{k}{t} r_{t}$. This generalisation is consistent with how some agents tend to behave; for example, humans  becoming temporally less myopic
as they grow older.

Strotz \cite{Str55} showed that the only time-consistent sliding discount function is geometric discounting. We
extend this result to a full characterisation of time-consistent discount functions where the discount function is
permitted to change over time. We also show that discounting functions that are ``nearly'' time-consistent give rise to low
regret in the anticipated future changes of the policy over time.

Another important question is what policy should be adopted by an agent that knows it is time-inconsistent. For example, if it
knows it will become temporarily myopic in the near future then it may benefit from paying a price to pre-commit
to following a particular policy. A number of authors
have examined this question in special continuous cases, including \cite{Gol80,BM73,Pol68,Str55}. We
modify their results to our general, but discrete, setting using game theory.

The paper is structured as follows. First the required notation is introduced (Section 2). Example discount functions and the consequences
of time-inconsistent discount functions are then presented (Section 3).
We next state and prove the main theorems, the complete classification of discount functions and the continuity result (Section 4). The game theoretic view
of what an agent {\it should} do if it knows its discount function is changing is analyzed (Section 5). Finally we offer some discussion and
concluding remarks (Section 6).

\section{Notation and Problem Setup}

The general reinforcement learning (RL) setup involves an agent interacting sequentially with an environment where in each
time-step $t$ the agent chooses some action $a_t \in \A$, whereupon it receives a reward
$r_t \in \Re \subseteq \R$ and observation $o_t \in \O$.
The environment can be formally defined
as a probability distribution $\mu$ where
$\mu(r_t o_t | a_1 r_1 o_1 a_2 r_2 o_2 \cdots a_{t - 1} r_{t - 1} o_{t - 1} a_t)$ is the probability of
receiving reward $r_t$ and observation $o_t$ having taken action $a_t$ after history $h_{<t} := a_1 r_1 o_1 \cdots a_{t - 1} r_{t - 1} o_{t - 1}$. For convenience, we assume that for a given
history $h_{<t}$ and action $a_t$, that $r_t$ is fixed (not stochastic). We denote the set of all finite histories $\H := (\A\times\Re\times\O)^*$ and
write $h_{1:t}$ to be a history of length $t$, $h_{<t}$ to be a history of length $t-1$. $a_k$, $r_k$, and $o_k$ are the $k$th action/reward/observation
tuple of history $h$ and will be used without explicitly redefining them (there will always be only one history ``in context'').

\setlength{\intextsep}{0pt}
\begin{wrapfigure}[6]{r}{4.5cm}
\topsep=0.0cm
\vspace{-0.1cm}
\tpic{
[->,>=stealth',shorten >=1pt,auto,node distance=5.4cm, semithick]
\tikzstyle{every state}=[fill=none,draw=black,text=black, node distance=2.0cm]
\node[state] (a) {\start};
\node[state] (b) [above right of=a] {};
\node[state] (c) [below right of=a] {};
\node[state] (d) [right of=b] {};
\node[state] (e) [right of=c] {};
\node[state] (f) [right of=d] {};
\node[state] (g) [right of=e] {};
\node[state, draw=none] (h) [right of=f] {};
\node[state, draw=none] (i) [right of=g] {};

\path (a) edge node {$1.0$} (b)
      (a) edge node [left] {$0.8$} (c)
    (b) edge node {$0.7$} (d)
    (b) edge [bend left=30] node [right] {$0.8$} (e)
      (c) edge [bend left=30] node [left] {$1.0$} (d)
      (c) edge node {$0.5$} (e)
      (d) edge node {$0.7$} (f)
      (d) edge [bend left=30] node  [right] {$0.8$} (g)
      (e) edge [bend left=30] node  [right] {$1.0$} (f)
      (e) edge node {$0.5$} (g)
      (f) edge [dashed] node {} (h)
      (g) edge [dashed] node {} (i)
      ;
}
\end{wrapfigure}
A deterministic environment (where every value of $\mu(\cdot)$ is either 1 or 0) can be represented as a graph with edges for actions, rewards
of each action attached to the corresponding edge, and observations in the nodes. For example, the deterministic environment on the right represents an
environment where either pizza or pasta must be chosen at each time-step (evening). An action leading to an upper node is {\tt eat pizza}
while the ones leading to a lower node are
{\tt eat pasta}. The rewards are for a consumer who prefers
pizza to pasta, but dislikes having the same food twice in a row. The starting node is marked as $\mathcal S$. This example, along with all those
for the remainder of this paper, does not require observations.

The following assumption is required for clean results, but may be relaxed if an $\epsilon$ of slop is permitted in some results.
\begin{assumption}\label{assumption}
We assume that $\A$ and $\O$ are finite and that $\Re = [0, 1]$.
\end{assumption}

\begin{definition}[Policy]
A {\it policy} is a mapping $\pi:\H\to\A$ giving an action for each history.
\end{definition}
Given policy $\pi$ and history $h_{1:t}$ and $s \leq t$ then the probability of reaching history $h_{1:t}$ when starting from history
$h_{<s}$ is $P(h_{s:t} | h_{<s}, \pi)$ which is defined by,
\eqn{
\label{eqn-trans-prob}  P(h_{s:t} | h_{<s}, \pi) := \prod_{k=s}^{t} \mu(r_k o_k | h_{<k} \pi(h_{<k})).
}
If $s = 1$ then we abbreviate and write $P(h_{1:t} | \pi) :=  P(h_{1:t} | h_{<1}, \pi)$.
\begin{definition}[Expected Rewards]\label{defn_rewards}
When applying policy $\pi$ starting from history $h_{<t}$, the expected sequence of rewards $\Rw^\pi(h_{<t}) \in [0,1]^\infty$, is defined by
\eq{
R^\pi(h_{<t})_k := \sum_{h_{t:k}} P(h_{t:k} | h_{<t}, \pi) r_k.
}
If $k < t$ then $R^\pi(h_{<t})_k := 0$.
\end{definition}
Note while the set of all possible $h_{t:k} \in (\A\times\Re\times\O)^{k-t+1}$ is uncountable due
to the reward term, we sum only over the possible rewards which are determined by the action and previous history, and so this is actually a finite sum.
\begin{definition}[Discount Vector]
A {\it discount vector} $\du{k} \in [0,1]^\infty$ is a vector $\left[\dt{k}{1}, \dt{k}{2}, \dt{k}{3}, \cdots\right]$ satisfying
$\dt{k}{t} > 0$ for at least one $t \geq k$.
\end{definition}
The apparently superfluous superscript $k$ will be useful later when we allow the discount vector to change with time.
We do {\it not} insist that the discount vector be summable, $\sum_{t=k}^\infty \dt{k}{t} < \infty$.
\begin{definition}[Expected Values]
The expected discounted reward (or utility or value) when using policy $\pi$ starting in history $h_{<t}$ and discount vector $\du{k}$ is
\eq{
V^\pi_{\du{k}}(h_{<t}) := \Rw^\pi(h_{<t}) \cdot \du{k} := \sum_{i = 1}^\infty R^\pi(h_{<t})_i \dt{k}{i} =
\sum_{i = t}^\infty R^\pi(h_{<t})_i \dt{k}{i}.
}
\end{definition}
The sum can be taken to start from $t$ since $R^\pi(h_{<t})_i = 0$ for $i < t$. This means that the value of $\dt{k}{t}$ for
$t < k$ is unimportant, and never will be for any result in this paper.
As the scalar product is linear, a scaling of a discount vector has no affect on the ordering of the policies. Formally,
if $V^{\pi_1}_\du{k}(h_{<t}) \geq V^{\pi_2}_\du{k}(h_{<t})$ then $V^{\pi_1}_{\alpha\du{k}}(h_{<t}) \geq V^{\pi_2}_{\alpha\du{k}}(h_{<t})$ for all $\alpha > 0$.

\begin{definition}[Optimal Policy/Value]\label{defn_optimal_value}
In general, our agent will try to choose a policy $\pi^*_\du{k}$ to maximise $V^\pi_{\du{k}}(h_{<t})$. This is defined as follows.
\eq{
\pi^*_{\du{k}}(h_{<t}) &:= \argmax_{\pi} V^\pi_{\du{k}}(h_{<t}), &
\Rw^*_{\du{k}}(h_{<t}) &:= \Rw^{\pi^*_\du{k}}(h_{<t}), \\
V^*_{\du{k}}(h_{<t}) &:= V^{\pi^*_\du{k}}_{\du{k}}(h_{<t}).
}
\end{definition}
If multiple policies are optimal then $\pi^*_{\du{k}}$ is chosen
using some arbitrary rule.
Unfortunately, $\pi^*_\du{k}$ need not exist without one further assumption.
\begin{assumption}\label{assumption2}
For all $\pi$ and $k \geq 1$, $\lim_{t\to\infty} \sum_{h_{<t}} P(h_{<t} | \pi) V^\pi_{\du{k}}(h_{<t}) = 0$.
\end{assumption}
Assumption \ref{assumption2} appears somewhat arbitrary. We consider:
\begin{enumerate}
\item For summable $\du{k}$ the assumption is true for all environments. With the exception of hyperbolic
discounting, all frequently used discount vectors are summable.
\item For non-summable discount vectors $\du{k}$ the assumption implies a restriction on
the possible environments. In particular, they must return asymptotically lower rewards in expectation. This
restriction is necessary to guarantee the existence of the value function.
\end{enumerate}
From now on, including in theorem statements, we only consider environments/discount vectors satisfying
Assumptions \ref{assumption} and \ref{assumption2}.
The following theorem then guarantees the existence of $\pi^*_{\du{k}}$.

\begin{theorem}[Existence of Optimal Policy]\label{thm_existence}
$\pi^*_{\du{k}}$ exists for
any environment and discount vector $\du{k}$ satisfying Assumptions \ref{assumption} and \ref{assumption2}.
\end{theorem}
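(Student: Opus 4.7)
The plan is a compactness-and-limiting argument on finite-horizon truncations. Since $\A$ and $\O$ are finite and rewards are deterministic functions of history and current action, the set $\H_r$ of histories reachable under some policy is countable. Two policies that agree on $\H_r$ induce identical values, so the effective policy space may be identified with $\Pi := \A^{\H_r}$, which is compact metrisable under the product topology by Tychonoff's theorem. For each horizon $N$, the truncated value $V^\pi_{\du{k},N}(h_{<t}) := \sum_{i=t}^{N} R^\pi(h_{<t})_i \dt{k}{i}$ depends on $\pi$ at only finitely many histories, so it is continuous on $\Pi$, and standard backward induction from step $N$ down to $t$ produces a finite-horizon optimiser $\pi^*_N$.

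Using compactness, I extract a subsequential limit $\pi^*_{N_j} \to \pi^\star$ and claim $\pi^\star$ achieves the desired maximum. Fix any competitor $\pi$; by construction $V^{\pi^*_{N_j}}_{\du{k},N_j}(h_{<t}) \geq V^\pi_{\du{k},N_j}(h_{<t})$, and the aim is to pass $j \to \infty$. The right-hand side tends to $V^\pi_\du{k}(h_{<t})$ by Assumption~\ref{assumption2} applied to $\pi$. For the left-hand side, I need the uniform tail estimate $\sup_\sigma [V^\sigma_\du{k}(h_{<t}) - V^\sigma_{\du{k},N}(h_{<t})] \to 0$ as $N \to \infty$; combined with the pointwise convergence $V^{\pi^*_{N_j}}_{\du{k},M}(h_{<t}) \to V^{\pi^\star}_{\du{k},M}(h_{<t})$ for each fixed $M$, this upgrades the limit inequality to $V^{\pi^\star}_\du{k}(h_{<t}) \geq V^\pi_\du{k}(h_{<t})$.

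The main obstacle is the uniform tail bound. For summable $\du{k}$ it is immediate since $R^\pi(h_{<t})_i \in [0,1]$ forces the tail below $\sum_{i>N}\dt{k}{i} \to 0$. For non-summable $\du{k}$, Assumption~\ref{assumption2} is more restrictive than it looks, and the uniformity must be extracted by a contradiction argument: suppose some sequence $\pi_j$ had tails bounded below by $\epsilon > 0$ at times $N_j \to \infty$; pass to a subsequential limit $\pi^\dagger$ and derive a contradiction with Assumption~\ref{assumption2} applied to $\pi^\dagger$. The subtlety here is that the tail is only lower semi-continuous in $\pi$, not continuous, so transferring tail mass from the $\pi_j$ to $\pi^\dagger$ requires care; this is the only non-routine step in the argument.
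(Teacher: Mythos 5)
Your overall architecture (finite-horizon truncation, backward induction, compactness of the policy space, subsequential limit) is a legitimate and genuinely different route from the paper's, which instead equips $\Pi=\A^{\H}$ with the metric $e^{-T(\pi_1,\pi_2)}$ ($T$ = first disagreement time), proves $\Pi$ compact and $V^\pi_{\du{k}}$ continuous in $\pi$, and invokes the extreme value theorem. Both routes need exactly the same hard ingredient, and you have correctly isolated it: a tail bound that is \emph{uniform} over policies, $\sup_\sigma\bigl[V^\sigma_{\du{k}}(h_{<t})-V^\sigma_{\du{k},N}(h_{<t})\bigr]\to 0$. For summable $\du{k}$ your proof is therefore complete. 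The gap is in the non-summable case: the step you defer to a ``contradiction argument'' cannot be carried out, because the uniform tail bound does not follow from Assumption \ref{assumption2}, which is only a pointwise (per-policy) statement.

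Concretely, take $\A=\{c,b\}$, trivial observations, $\dt{k}{t}=1$ for all $t$ (no discounting, which the paper permits), and a deterministic environment in which, along the all-$c$ history, playing $b$ at time $n$ yields reward $1-1/n$ and leads to a sink with zero reward forever, while $c$ yields reward $0$. Every policy either never plays $b$ (value $0$, all tails $0$) or first plays $b$ at some finite $n$ (value $1-1/n$, tails vanishing after time $n$), so Assumption \ref{assumption2} holds for every policy; yet the tail at time $N$ can be made at least $1-1/N$ for every $N$, and indeed $\sup_\pi V^\pi_{\du{1}}=1$ is not attained. So no argument can extract the uniformity from the stated hypotheses. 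Your own diagnosis explains why the proposed contradiction fails: the tail is only lower semi-continuous in $\pi$, so for the limit $\pi^\dagger$ of your bad sequence you obtain $\mathrm{tail}_N(\pi^\dagger)\leq\liminf_j\mathrm{tail}_N(\pi_j)$, whereas contradicting Assumption \ref{assumption2} at $\pi^\dagger$ requires the reverse (upper semi-continuity) inequality; in the example the bad sequence converges to the never-play-$b$ policy, whose tails are all zero. For what it is worth, the paper's Lemma \ref{lem_cont} applies Assumption \ref{assumption2} to the moving policy $\pi_2$ as if it were fixed and thus rests on the same unproved uniformity; under a uniform-in-$\pi$ strengthening of Assumption \ref{assumption2}, or restricted to summable discount vectors, both your argument and the paper's go through.
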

The proof of the existence theorem is in the appendix.

An agent can use a different discount vector $\du{k}$ for each time $k$. This motivates the following definition.
\begin{definition}[Discount Matrix]
A {\it discount matrix} $\d$ is a $\infty\times\infty$ matrix with discount vector $\du{k}$ for the $k$th column.
\end{definition}
It is important that we distinguish between a discount matrix $\d$ (written bold), a discount vector $\du{k}$
(bold and italics), and a particular value in a discount vector $\dt{k}{t}$ (just italics).
\begin{definition}[Sliding Discount Matrix]
A discount matrix $\d$ is {\it sliding} if $\dt{k}{k+t} = \dt{1}{t+1}$ for all $k, t \geq 1$.
\end{definition}

\begin{definition}[Mixed Policy]\label{defn_mixed}
The {\it mixed policy} is the policy where at each time step $t$, the agent acts according to the possibly different
policy $\pi^*_{\du{t}}$.
\eq{
\pi_\d(h_{<t}) &:= \pi^*_{\du{t}} (h_{<t}) &
\Rw_\d(h_{<t}) &:= \Rw^{\pi_\d}(h_{<t}).
}
\end{definition}
We do not denote the mixed policy by $\pi^*_\d$ as it is arguably not optimal as discussed in Section 5.
While non-unique optimal policies $\pi^*_\du{k}$ at least result in equal discounted utilities, this is {\it not}
the case for $\pi_\d$. All theorems are proved with respect to any choice $\pi_\d$.

\begin{definition}[Time Consistency]\label{defn_consistent}
A discount matrix $\d$ is {\it time consistent} if and only if for all environments
$\pi^*_{\du{k}}(h_{<t}) = \pi^*_{\du{j}}(h_{<t})$, for all $h_{<t}$ where $t \geq k, j$.
\end{definition}
This means that a time-consistent agent taking action $\pi^*_\du{t}(h_{<t})$ at each time $t$ will not change its
plans. On the other hand, a time-inconsistent agent may at time 1 intend to take action $a$ should it reach
history $h_{<t}$ ($\pi^*_\du{0}(h_{<t}) = a$). However upon reaching $h_{<t}$, it need not be true that $\pi^*_\du{t}(h_{<t}) = a$.

\section{Examples}

In this section we review a number of common discount matrices and give an example where a time-inconsistent discount
matrix causes very bad behavior.

\subsubsect{Constant Horizon}
Constant horizon discounting is where the agent only cares about the future up to $H$ time-steps away, defined by
$\dt{k}{t} = \ind{t - k < H}$.\footnote{$\ind{expr} = 1$ if $expr$ is true and $0$ otherwise.}
Shortly we will see that the constant horizon discount matrix can lead to very bad behavior in some environments.

\subsubsect{Fixed Lifetime}
Fixed lifetime discounting is where an agent knows it will not care about any rewards past time-step $m$, defined by
$\dt{k}{t} = \ind{t < m}$.
Unlike the constant horizon method, a fixed lifetime discount matrix is time-consistent. Unfortunately it requires
you to know the lifetime of the agent beforehand and also makes asymptotic analysis impossible.

\subsubsect{Hyperbolic}
$\dt{k}{t} = 1/(1 + \kappa (t - k))$.
The parameter $\kappa$ determines how farsighted the agent is with smaller values leading to more farsighted agents.
Hyperbolic discounting is often used in economics with some experimental studies explaining human time-inconsistent
behavior by suggesting that we discount hyperbolically \cite{Tha81}. The hyperbolic discount matrix is not summable, so may
be replaced by the following (similar to \cite{hut04}), which has similar properties for $\beta$ close to $1$.  \eq{
\dt{k}{t} = 1/(1 + \kappa (t - k))^\beta \text{ with } \beta > 1.
}

\subsubsect{Geometric}
$\dt{k}{t} = \gamma^{t}$ with $\gamma \in (0, 1)$.
Geometric discounting is the most commonly used discount matrix. Philosophically it can be justified by assuming
an agent will die (and not care about the future after death) with probability $1 - \gamma$ at each time-step.
Another justification for geometric discount is its analytic simplicity - it is summable and leads to time-consistent policies. It
also models fixed interest rates.

\subsubsect{No Discounting}
$\dt{k}{t} = 1, \text{ for all } k, t$.
\cite{HL07} and \cite{Leg08} point out that discounting future rewards via an explicit discount matrix is unnecessary
since the environment
can capture both temporal preferences for early (or late) consumption, as well as the risk associated with delaying
consumption.
Of course, this ``discount matrix'' is not summable, but
can be made to work by insisting that all environments satisfy Assumption \ref{assumption2}.
This approach is elegant in the
sense that it eliminates the need for a discount matrix, essentially admitting far more complex preferences
regarding inter-temporal
rewards than a discount matrix allows. On the other hand, a discount matrix gives the
``controller'' an explicit way to adjust the myopia of the agent.

\setlength{\intextsep}{0pt}
\begin{wrapfigure}[5]{r}{5.2cm}
\topsep=0.0cm
\begin{center}
\vspace{-0.1cm}
\tpic{
\node[state] (a) {\start};
\node[state] (b) [above of=a] {};
\node[state] (c) [right of=a] {};
\node[state] (d) [above of=c] {};
\node[state] (e) [right of=c] {};
\node[state] (f) [above of=e] {};
\node[state] (g) [right of=e] {};
\node[state] (h) [above of=g] {};
\node[state,draw=none] (x) [right of=h] {};
\node[state,draw=none] (y) [right of=g] {};

\path (a) edge node {$1/2$} (b)
      (a) edge node {$0$} (c)
      (c) edge node {$2/3$} (d)
      (c) edge node {$0$} (e)
      (e) edge node {$3/4$} (f)
      (e) edge node {$0$} (g)
      (g) edge node {$4/5$} (h)
      (b) edge node {$0$} (d)
      (d) edge node {$0$} (f)
      (f) edge node {$0$} (h)
      (g) edge node {} (y)
      (h) edge node {} (x)
      ;
      }
\end{center}
\end{wrapfigure}
To illustrate the potential consequences of time-inconsistent discount matrices we consider the policies of
several agents acting in the following environment.
Let agent A use a constant horizon discount matrix with $H = 2$ and agent B a geometric discount matrix with
some discount rate $\gamma$.

In the first time-step agent A prefers to move right with the intention of moving up in the second time-step for a reward of
$2/3$. However, once in second time-step, it will change its plan by moving right again. This continues indefinitely,
so agent A will always delay moving up and receives zero reward forever.

Agent B acts very differently. Let $\pi_t$ be the policy in which the agent moves right until time-step $t$, then up and
right indefinitely. $V^{\pi_t}_{\du{k}}(h_{<1}) = \gamma^t {(t + 1) \over (t + 2)}$. This value does not depend on $k$ and
so the agent will move right until $t = \argmax \left\{\gamma^t {(t + 1) \over {t + 2}} \right\} < \infty$ when it
will move up and receive a reward.

The actions of agent A are an example of the worst possible behavior arising from time-inconsistent discounting.
Nevertheless, agents with a constant horizon discount matrix are used in all kinds of problems. In particular, agents
in zero sum games where fixed depth mini-max searches are common. In practise, serious time-inconsistent behavior
for game-playing agents seems rare, presumably because most strategic games don't have a reward structure similar
to the example above.

\section{Theorems}

The main theorem of this paper is a complete characterisation of time consistent discount matrices.

\begin{theorem}[Characterisation] \label{thm_main}
Let $\d$ be a discount matrix, then the following are equivalent.
\begin{enumerate}
\item $\d$ is time-consistent (Definition \ref{defn_consistent})
\item For each $k$ there exists an $\alpha_k \in \R$ such that $\dt{k}{t} = \alpha_k \dt{1}{t}$ for all $t \geq k \in \N$.
\end{enumerate}
\end{theorem}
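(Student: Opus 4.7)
For $(2) \Rightarrow (1)$: the defining positivity condition on $\du{k}$ forces $\alpha_k > 0$, and since the value $V^\pi_{\du{k}}(h_{<t})$ depends only on coordinates $\dt{k}{i}$ with $i \geq t \geq k$, we have $V^\pi_{\du{k}}(h_{<t}) = \alpha_k V^\pi_{\du{1}}(h_{<t})$. Positive rescaling preserves both the ranking of policies and the set of optimisers, so any fixed tie-breaking rule returns the same $\pi^*_{\du{k}}(h_{<t}) = \pi^*_{\du{1}}(h_{<t})$, and likewise when comparing $\du{j}$ with $\du{1}$, which gives time-consistency.

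For $(1) \Rightarrow (2)$, my plan is to probe the discount vectors using a family of adversarial two-action environments. Given $k$ and any $t, t_0 \geq k$, I construct a deterministic environment whose first $k-1$ steps are forced (reaching a fixed history $h_{<k}$ with zero rewards), at which point the agent picks between actions $a_1$ and $a_2$; $a_1$ leads deterministically to a single reward $c_1 \in (0,1]$ at step $t$ and zero elsewhere, while $a_2$ leads deterministically to $c_2 \in (0,1]$ at step $t_0$ and zero elsewhere. The only policy-relevant choice is at $h_{<k}$, and the two candidate values under $\du{j}$ are just $c_1 \dt{j}{t}$ and $c_2 \dt{j}{t_0}$. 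Applying Definition~\ref{defn_consistent} at $h_{<k}$ with $j \in \{1, k\}$ then forces the bi-implication
\eq{
 c_1 \dt{k}{t} > c_2 \dt{k}{t_0} \iff c_1 \dt{1}{t} > c_2 \dt{1}{t_0}
}
for every $c_1, c_2 \in (0,1]$. Because $c_1/c_2$ ranges densely over $(0, \infty)$, fixing a reference $t_0 \geq k$ at which $\dt{k}{t_0}$ and $\dt{1}{t_0}$ are both positive yields $\dt{k}{t}/\dt{k}{t_0} = \dt{1}{t}/\dt{1}{t_0}$ for every $t \geq k$, whence $\dt{k}{t} = \alpha_k \dt{1}{t}$ with $\alpha_k := \dt{k}{t_0}/\dt{1}{t_0}$.

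The main obstacle is handling zero entries of the two vectors, since the ratio argument above requires a common positive reference $t_0$. I would first exclude the mixed cases $\dt{k}{t} > 0 = \dt{1}{t}$ and $\dt{1}{t} > 0 = \dt{k}{t}$ for $t \geq k$: in each, a suitable specialisation of the same two-action construction above produces an environment in which the argmax under $\du{k}$ and under $\du{1}$ is a singleton but selects opposite actions, contradicting time-consistency (tie-breaking being irrelevant when the argmax is a singleton). Hence the zero-sets of $\du{k}$ and $\du{1}$ on $\{i \geq k\}$ coincide; the positivity clause in the definition of a discount vector then supplies the reference $t_0$, and the relation $\dt{k}{t} = \alpha_k \dt{1}{t}$ holds trivially at the common zero indices.
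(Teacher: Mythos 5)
Your proposal is correct and follows essentially the same route as the paper: the forward direction via positive rescaling preserving the argmax, and the reverse direction via the same two-action deterministic environment offering reward $r_1$ at time $t_1$ versus $r_2$ at time $t_2$, using the full range of reward ratios to force the restrictions of $\du{k}$ and $\du{1}$ to be proportional and then handling the zero entries. The only cosmetic difference is that the paper pins down proportionality by choosing $[r_1,-r_2]$ perpendicular to $[\dt{1}{t_1},\dt{1}{t_2}]$ (a cosine argument), whereas you use density of the threshold ratio $c_1/c_2$; these are the same idea.
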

Recall that a discount matrix is sliding if $\dt{k}{t} = \dt{1}{t - k + 1}$.
Theorem \ref{thm_main} can be used to show that if a sliding discount matrix is used as in \cite{Str55} then the only
time-consistent discount matrix is geometric.
Let $\d$ be a time-consistent sliding discount matrix.
By Theorem \ref{thm_main} and the definition of sliding, $\alpha_1 \dt{1}{t + 1} = \dt{2}{t + 1} = \dt{1}{t}$.
Therefore ${1 \over \alpha_1} \dt{1}{2} = \dt{1}{1}$ and $\dt{1}{3} = {1 \over \alpha_1} \dt{1}{2} = \left(1 \over \alpha_1\right)^2 \dt{1}{1}$ and similarly, $\dt{1}{t} = \left(1\over \alpha_1\right)^{t-1} \dt{1}{1} \propto \gamma^t$ with
$\gamma = 1/\alpha_1$, which is geometric discounting. This is the analogue to the results of \cite{Str55} converted to our setting.

The theorem can also be used to construct time-consistent discount rates. Let $\du{1}$ be a discount vector, then the
discount matrix defined by
$\dt{k}{t} := \dt{1}{t}$ for all $t \geq k$ will always be time-consistent, for example, the {\it fixed lifetime} discount matrix with $\dt{k}{t} = 1$
if $t \leq H$ for some horizon $H$. Indeed, all time-consistent discount rates can be constructed in this way (up to scaling).

\begin{proof}[\proofof Theorem \ref{thm_main}]
$2 \Longrightarrow 1$: This direction follows easily from linearity of the scalar product.
\eqn{
\label{eqn-0-1} \pi^*_{\du{k}}(h_{<t}) &\equiv \argmax_\pi V^\pi_{\du{k}}(h_{<t}) \equiv \argmax_\pi \Rw^\pi(h_{<t}) \cdot \du{k}
= \argmax_\pi \Rw^\pi(h_{<t}) \cdot \alpha_k \du{1} \\
\nonumber &= \argmax_\pi \alpha_k \Rw^\pi(h_{<t}) \cdot \du{1}
= \argmax_\pi \Rw^\pi(h_{<t}) \cdot \du{1} \equiv \pi^*_{\du{1}}(h_{<t})
}
as required. The last equality of (\ref{eqn-0-1}) follows from the assumption that $\dt{k}{t} = \alpha_k \dt{1}{t}$ for all $t \geq k$ and
because $\Rw^\pi(h_{<t})_i = 0$ for all $i < t$.

$1 \Longrightarrow 2$: Let $\du{0}$ and $\du{k}$ be the discount vectors used at times $0$ and $k$ respectively.
Now let $k \leq t_1 < t_2 < \cdots$ and consider the deterministic environment below where the agent has a choice between earning
reward $r_1$ at time $t_1$ or $r_2$ at time $t_2$.
In this environment there are only two policies, $\pi_1$ and $\pi_2$, where
$\Rw^{\pi_1}(h_{<k}) = r_1 \v e_{t_1}$ and $\Rw^{\pi_2}(h_{<k}) = r_2 \v e_{t_{2}}$ with $\v e_i$ the infinite vector with all components zero except
the $i$th, which is $1$.
\begin{center}
\tpic{
\tikzstyle{every state}=[fill=none,draw=black,text=black]

\node[state] (X) {\start};
\node[state,draw=none] (A) [right of=X] {$\cdots$};
\node[state] (B) [right of=A, node distance=\shortnodedist] {};
\node[state,inner sep=0.025cm] (D) [right of=B] {};
\node[state,inner sep=0.025cm] (C) [above of=D] {};
\node[state,draw=none] (C1) [right of=C] {$\cdots$};
\node[state] (C2) [right of=C1, node distance=\shortnodedist] {};
\node[state,draw=none] (D1) [right of=D] {$\cdots$};
\node[state] (D2) [right of=D1, node distance=\shortnodedist] {};
\node[state,inner sep=0.025cm] (E) [right of=D2] {};

\path (B) edge node {$r_1$} (C)
      (B) edge node {$0$} (D)
      (C) edge node {$0$} (C1)
      (C2) edge node {$0$} (E)
      (D) edge node {$0$} (D1)
      (D2) edge node {$r_2$} (E)
      (X) edge node {$0$} (A)
      (E) edge [loop above] node {$0$} (E);
}
\end{center}
Since $\d$ is time-consistent, for all $r_1, r_2 \in \Re$ and $k \in \N$ we have:
\eqn{
\label{eqn1-1} \argmax_\pi V^{\pi}_{\du{1}}(h_{<k}) &\equiv \argmax_\pi \Rw^\pi(h_{<k}) \cdot \du{1} \\
&= \argmax_\pi \Rw^\pi(h_{<k}) \cdot \du{k} \equiv \argmax_\pi V^{\pi}_{\du{k}}(h_{<k}).
}
Now $V^{\pi_1}_\du{k} \geq V^{\pi_2}_\du{k}$ if and only if
$\du{k} \cdot \left[\Rw^{\pi_1}(h_{<k}) - \Rw^{\pi_2}(h_{<k})\right] =
[\dt{k}{t_1}, \dt{k}{t_2}] \cdot [r_1, -r_2] \geq 0$. Therefore we have that,
\eqn{
\label{eqn-main-eq1} [\dt{1}{t_1}, \dt{1}{t_2}] \cdot [r_1, -r_2] \geq 0 &\Leftrightarrow
[\dt{k}{t_1}, \dt{k}{t_2}] \cdot [r_1, -r_2] \geq 0.
}
Letting $\cos \theta_k$ be the cosine of the angle between $[\dt{k}{t_1}, \dt{k}{t_2}]$ and $[r_1, -r_2]$
then Equation (\ref{eqn-main-eq1}) becomes $\cos \theta_0 \geq 0 \Leftrightarrow \cos \theta_k \geq 0$.
Choosing $[r_1, -r_2] \propto [\dt{1}{t_2}, -\dt{1}{t_1}]$ implies that $\cos \theta_0 = 0$ and so $\cos \theta_k = 0$.
Therefore there exists $\alpha_k \in \R$ such that
\eqn{
\label{eqn-main-eq3} [\dt{k}{t_1}, \dt{k}{t_2}] = \alpha_k [\dt{1}{t_1}, \dt{1}{t_2}].
}
Let $k \leq t_1 < t_2 < t_3 < \cdots$ be a sequence for which $\dt{1}{t_i} > 0$. By the previous argument we have that,
$[\dt{k}{t_i}, \dt{k}{t_{i+1}}] = \alpha_k [\dt{1}{t_{i}}, \dt{1}{t_{i+1}}]$ and
$[\dt{k}{t_{i+1}}, \dt{k}{t_{i+2}}] = \tilde \alpha_k [\dt{1}{t_{i+1}}, \dt{1}{t_{i+2}}]$.
Therefore $\alpha_k = \tilde \alpha_k$, and by induction, $\dt{k}{t_i} = \alpha_k \dt{1}{t_i}$ for all $i$.
Now if $t \geq k$ and $\dt{1}{t} = 0$ then $\dt{k}{t} = 0$ by equation (\ref{eqn-main-eq3}). By symmetry,
$\dt{k}{t} = 0 \implies \dt{1}{t} = 0$. Therefore $\dt{k}{t} = \alpha_k \dt{1}{t}$ for all $t \geq k$ as required.
\proofend\end{proof}
In Section 3 we saw an example where time-inconsistency led to very bad behavior. The discount matrix causing
this was very time-inconsistent. Is it possible that an agent using a ``nearly''
time-consistent discount matrix can exhibit similar bad behavior? For example, could rounding errors when using
a geometric discount matrix seriously affect the agent's behavior? The following Theorem shows that this is not
possible.
First we require a measure of the cost of time-inconsistent behavior. The regret experienced by the agent at time
zero from following policy $\pi_\d$ rather than $\pi^*_\du{1}$ is $V^*_\du{1}(h_{<1}) - V^{\pi_\d}_\du{1}(h_{<1})$.
We also need a distance measure on the space of discount vectors.

\begin{definition}[Distance Measure]\label{defn_vector_distance}
Let $\du{k}, \du{j}$ be discount vectors then define a distance measure $D$ by
\eq{
D(\du{k}, \du{j}) := \sum_{i = \max\left\{k, j\right\}}^\infty |\dt{k}{i} - \dt{j}{i}|.
}
Note that this is almost the taxicab metric, but the sum is restricted to $i \geq \max\left\{k, j\right\}$.
\end{definition}

\begin{theorem}[Continuity]\label{thm_cont}
Suppose $\epsilon \geq 0$ and $\D{k}{j}:= D(\du{k}, \du{j})$ then
\eq{
V^*_\du{1}(h_{<1}) - V^{\pi_\d}_\du{1}(h_{<1}) \leq \epsilon + \D{1}{t} + \sum_{k=1}^{t-1} \D{k}{k+1}
}
with $t = \min\left\{t : \sum_{h_{<t}} P(h_{<t} | \pi^*_\du{1}) V^*_\du{1}(h_{<t}) \leq \epsilon\right\}$, which
for $\epsilon > 0$ is guaranteed to exist by Assumption \ref{assumption2}.
\end{theorem}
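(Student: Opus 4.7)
The plan is to telescope the regret through a sequence of hybrid policies that interpolate between $\pi^*_{\du{1}}$ and $\pi_\d$, and to bound the tail past time $t$ using Assumption~\ref{assumption2} via the defining condition on $t$.

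For each $k = 1,\ldots,t$, I would define the hybrid policy $\pi^{(k)}$ to behave as $\pi_\d$ (that is, take action $\pi^*_{\du{j}}(h_{<j})$) for all times $j < k$, and to commit to the fully optimal $\pi^*_{\du{k}}$ for all times $j \geq k$. Since $\pi^{(1)} = \pi^*_{\du{1}}$, the regret decomposes as
\eq{
V^*_{\du{1}}(h_{<1}) - V^{\pi_\d}_{\du{1}}(h_{<1}) = \sum_{k=1}^{t-1}\bigl[V^{\pi^{(k)}}_{\du{1}}(h_{<1}) - V^{\pi^{(k+1)}}_{\du{1}}(h_{<1})\bigr] + \bigl[V^{\pi^{(t)}}_{\du{1}}(h_{<1}) - V^{\pi_\d}_{\du{1}}(h_{<1})\bigr].
}

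For the telescoping sum, since $\pi^{(k)}$ and $\pi^{(k+1)}$ agree through time $k$ (both take $\pi^*_{\du{k}}$'s action at $h_{<k}$, which matches $\pi_\d$), the $k$th summand equals $\sum_{h_{<k+1}}P(h_{<k+1}\mid\pi_\d)\bigl[V^{\pi^*_{\du{k}}}_{\du{1}}(h_{<k+1}) - V^{\pi^*_{\du{k+1}}}_{\du{1}}(h_{<k+1})\bigr]$. I would bound this inner continuation-value difference using the algebraic identity
\eq{
V^{\pi^*_{\du{k}}}_{\du{1}} - V^{\pi^*_{\du{k+1}}}_{\du{1}} = \bigl[V^{\pi^*_{\du{k}}}_{\du{k}} - V^{\pi^*_{\du{k+1}}}_{\du{k}}\bigr] + \sum_{i \geq k+1}\bigl[R^{\pi^*_{\du{k}}} - R^{\pi^*_{\du{k+1}}}\bigr]_{i}(\dt{1}{i} - \dt{k}{i}),
}
where the first bracket is bounded by $\D{k}{k+1}$ using optimality of $\pi^*_{\du{k+1}}$ at $\du{k+1}$ combined (into a single taxicab-distance application via the fact that reward differences lie in $[-1,1]$) with a change of discount from $\du{k}$ to $\du{k+1}$, while the second, cross-discount remainder -- after summing over $k$ and exchanging the order of summation -- is amortized into the single terminal $\D{1}{t}$ term in the bound rather than paid per step.

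For the residual, $\pi^{(t)}$ and $\pi_\d$ agree through time $t-1$, so it equals $\sum_{h_{<t}}P(h_{<t}\mid\pi_\d)\bigl[V^{\pi^*_{\du{t}}}_{\du{1}}(h_{<t}) - V^{\pi_\d}_{\du{1}}(h_{<t})\bigr]$. Using $V^{\pi^*_{\du{t}}}_{\du{1}}(h_{<t}) \leq V^*_{\du{1}}(h_{<t}) + \D{1}{t}$ (by optimality of $\pi^*_{\du{t}}$ at $\du{t}$ combined with one change of discount), dropping the nonnegative $V^{\pi_\d}_{\du{1}}(h_{<t})$, and invoking the defining condition $\sum_{h_{<t}}P(h_{<t}\mid\pi^*_{\du{1}})V^*_{\du{1}}(h_{<t}) \leq \epsilon$ gives the $\epsilon + \D{1}{t}$ contribution. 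The hard part will be twofold: first, making the per-step contribution collapse exactly to $\D{k}{k+1}$ (the naive three-term decomposition through $\du{k+1}$ yields only $\D{1}{k+1}$, whereas the tight bound requires amortizing cross-discount costs into the single $\D{1}{t}$ via an interchange of summation); and second, reconciling the distributional mismatch between the $P(\cdot\mid\pi_\d)$ expectation naturally produced by the telescoping and the $P(\cdot\mid\pi^*_{\du{1}})$ expectation appearing in the defining condition on $t$, which I expect to resolve by working with moving-discount values $\omega_k = \sum_{h_{<k}}P(h_{<k}\mid\pi_\d)V^*_{\du{k}}(h_{<k})$ so the cumulative ``current reward'' deviation is compared only to $V^{\pi_\d}_{\du{1}}(h_{<1})$ itself.
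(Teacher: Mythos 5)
The paper omits this proof entirely (it gives only a one-sentence sketch: prove the result for a finite horizon and extend using the cutoff $t$), so your proposal can only be judged on its own terms. The overall architecture is sensible and consistent with that sketch: the hybrid-policy telescoping identity is correct, and your bound $V^{\pi^*_{\du{k}}}_{\du{k}}(h_{<k+1}) - V^{\pi^*_{\du{k+1}}}_{\du{k}}(h_{<k+1}) \leq \D{k}{k+1}$ via a single taxicab application to the reward \emph{difference} (which lies in $[-1,1]$) is a genuinely correct and necessary trick. But the two steps you yourself flag as ``the hard part'' are real gaps, not deferred bookkeeping.

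First, the amortization of the cross-discount remainders. You need $\sum_{k=1}^{t-1}\sum_{i\geq k+1}\Delta^{(k)}_i(\dt{1}{i}-\dt{k}{i})$ to collapse to $\D{1}{t}$, where $\Delta^{(k)}_i$ is the difference of expected rewards of $\pi^{(k)}$ and $\pi^{(k+1)}$ at time $i$. After exchanging the order of summation, the only generic tool is Abel summation in $k$ (the weights $\dt{1}{i}-\dt{k}{i}$ vary with $k$, and the partial sums of $\Delta^{(k)}_i$ are bounded only by $1$); this yields $|\dt{1}{i}-\dt{t-1}{i}|+\sum_{k\leq t-2}|\dt{k}{i}-\dt{k+1}{i}|$ per index $i$, i.e.\ an extra $\D{1}{t-1}+\sum_{k\leq t-2}\D{k}{k+1}$ in total. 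Worse, $\D{1}{t}$ only sums over $i\geq t$, while your remainder already contains, e.g., the term $\Delta^{(2)}_3(\dt{1}{3}-\dt{2}{3})$ at $i=3$, which belongs to $\D{1}{2}$ and cannot be charged to $\D{1}{t}$ at all. So the remainder lands back in $\sum_k\D{k}{k+1}$ territory and doubles the main term. The resulting constant matters: Theorem \ref{thm_lower_bound} exhibits regret $\approx(t-2)\delta$ against a bound of $\approx(t-2)\delta$, so a factor-$2$ argument does not prove the stated theorem. You would need to identify actual cancellation in $\sum_k\Delta^{(k)}_i(\dt{1}{i}-\dt{k}{i})$, and none is given.

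Second, the distribution swap at the horizon. Your residual is an expectation of $V^*_{\du{1}}(h_{<t})+\D{1}{t}$ under $P(\cdot\mid\pi_\d)$, but the defining property of $t$ controls $\sum_{h_{<t}}P(h_{<t}\mid\pi^*_{\du{1}})V^*_{\du{1}}(h_{<t})$. These can differ badly: $\pi^*_{\du{1}}$ may have banked its reward by time $t$ (value-to-go $\leq\epsilon$), while $\pi_\d$, having procrastinated, concentrates on histories where $V^*_{\du{1}}(h_{<t})$ is still large. The proposed fix via $\omega_k$ does not remove the mismatch; it merely reintroduces $V^*_{\du{t}}$ under $P(\cdot\mid\pi_\d)$. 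The way out --- and what the paper's sketch points to --- is to spend the $\epsilon$ at the very start: write $V^*_{\du{1}}(h_{<1})\leq\sum_{i<t}R^{\pi^*_{\du{1}}}(h_{<1})_i\,\dt{1}{i}+\epsilon$ using the definition of $t$ on $\pi^*_{\du{1}}$'s own trajectory, and then run the telescoping entirely inside the resulting finite-horizon comparison against $V^{\pi_\d}_{\du{1}}(h_{<1})$.
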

Theorem \ref{thm_cont} implies that the regret of the agent at time zero in its future time-inconsistent actions is
bounded by the sum of the differences between the discount vectors used at different times. If these differences
are small then the regret is also small. For example, it implies that small perturbations (such as rounding errors)
in a time-consistent discount matrix lead to minimal bad behavior.

The proof is omitted due to limitations in space. It relies on proving the result for finite horizon environments and showing that
this extends to the infinite case by using the horizon, $t$, after which the actions of the agent are no longer important.
The bound in Theorem \ref{thm_cont} is tight in the following sense.
\begin{theorem}\label{thm_lower_bound}
For $\delta > 0$ and $t \in \N$ and any sufficiently small $\epsilon > 0$ there exists an environment and discount matrix such that
\eq{
(t-2)(1 - \epsilon)\delta < V^*_\du{1}(h_{<1}) - V^{\pi_\d}_\du{1}(h_{<1}) &< (t+1)\delta  \\
&\equiv \D{1}{t} + \sum_{i=1}^{t-1} \D{i}{i+1}
}
where
$t = \min\left\{t : \sum_{h_{<t}} P(h_{<t} | \pi^*_\du{1}) V^*_\du{1}(h_{<t}) = 0 \right\} <\infty$ and
where $D(\du{k}, \du{j}) \equiv \D{k}{j} = \delta$ for all $k,j$.
\end{theorem}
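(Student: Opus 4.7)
The plan is to exhibit an environment and discount matrix that saturate both bounds. The upper bound $(t+1)\delta$ is essentially supplied by Theorem \ref{thm_cont} once the discount matrix is specified: by the equidistance check below, $\D{1}{t} + \sum_{i=1}^{t-1}\D{i}{i+1} = t\delta$, and the tail term in Theorem \ref{thm_cont} can be taken to be zero because the environment produces no reward after time $t - 1$. The substantive work is constructing something whose regret is essentially $(t-2)\delta$.

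For the environment, I would take a deterministic ``multi-reward chain'': a unique history $h_{<1}, h_{<2}, \ldots, h_{<t}$ with two actions at each time $k < t$, ``take'' giving reward $1$ and ``defer'' giving reward $0$, both transitioning to the next state, and from time $t$ onwards all rewards are zero. For the discount matrix use the diagonal dip $\dt{k}{i} = \delta$ for $i \neq k$ and $\dt{k}{k} = 0$. Each $\du{k}$ is a valid discount vector since $\dt{k}{k+1} = \delta > 0$, and Assumption \ref{assumption2} holds by the reward horizon. Equidistance is immediate: for $k < j$ the metric $D(\du{k}, \du{j}) = \sum_{i \geq j} |\dt{k}{i} - \dt{j}{i}|$ picks up the single contribution at $i = j$, where $\dt{k}{j} = \delta$ while $\dt{j}{j} = 0$, so $D(\du{k}, \du{j}) = \delta$ for all $k \neq j$.

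Next I compute the two values. Under $\du{1}$ the weight on the time-$i$ reward is $\delta$ for $i > 1$ and $0$ at $i = 1$, and since the two actions share the same successor the optimal action at each time depends only on the current reward, so $\pi^*_{\du{1}}$ takes at every step $i \in \{2, \ldots, t-1\}$ and $V^*_{\du{1}}(h_{<1}) = (t-2)\delta$. Under $\du{k}$ evaluated at its own time $k$ the zero weight $\dt{k}{k}$ makes the current reward invisible, so ``take'' and ``defer'' have equal value; with the tie broken toward ``defer'' (either by fixing the arbitrary rule in Definition \ref{defn_optimal_value}, or by appending a negligible penalty on rewards following any ``take'' so that ``defer'' becomes strictly optimal), the mixed policy $\pi_\d$ defers at every time step, never collects a reward, and has value $0$. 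The regret $V^*_{\du{1}}(h_{<1}) - V^{\pi_\d}_{\du{1}}(h_{<1})$ is therefore $(t-2)\delta$, strictly exceeds $(t-2)(1-\epsilon)\delta$ for every $\epsilon > 0$ and is strictly below $(t+1)\delta$; the horizon $t$ in the statement matches because $V^*_{\du{1}}(h_{<\tau}) = 0$ for $\tau \geq t$.

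The main obstacle is to satisfy the stringent equidistance condition $D(\du{k}, \du{j}) = \delta$ for every pair while simultaneously forcing the regret to grow like $t\delta$. Typical ``shifted support'' constructions either fail equidistance across non-adjacent pairs, or cap the achievable regret at $O(\delta)$ because the agent can collect only a single reward, or leave $\dt{k}{k}$ large enough that $\pi^*_{\du{k}}$ agrees with $\pi^*_{\du{1}}$ and the regret vanishes. The diagonal-dip construction threads all three needles: every pair of columns differs at a single index (pinning $D = \delta$), the zero diagonal hides the current reward from $\pi^*_{\du{k}}$ at every $k$, and the multi-reward chain then accumulates $t-2$ missed rewards, each worth $\delta$ under $\du{1}$.
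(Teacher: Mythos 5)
Your construction is essentially the paper's: a discount matrix with a zero diagonal ($\dt{k}{k}=0$) so that the time-$k$ agent is blind to the present reward, paired with a chain environment in which that blindness causes perpetual deferral, accumulating roughly $(t-2)\delta$ of regret under $\du{1}$ while adjacent columns of $\d$ differ in exactly one entry so that $\D{k}{j}=\delta$. The one substantive weak point is the tie at time $k$: in your primary environment both actions lead to the same successor, so ``take'' and ``defer'' are \emph{equally} optimal under $\du{k}$, and the paper's stated convention is that its theorems hold for \emph{any} choice of $\pi_\d$ under arbitrary tie-breaking --- so your option of ``fixing the arbitrary rule'' to favour deferral is not available, and if ties broke toward ``take'' your regret would be $0$. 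Your second fix (a small strict incentive to defer) is the right one and must be carried out; the paper does exactly this by making the deferred reward strictly improve over time (loop rewards $1-\epsilon^k$ increasing in $k$), which is where the $\epsilon$ in the statement and the $(1-\epsilon)$ slack in the lower bound come from. Minor differences that are fine or even slightly cleaner on your side: your columns $\du{k}$ put $\delta$ on all of $i\neq k$ rather than only on $k<i<t$, which makes every $\du{k}$ a legitimate discount vector and gives $D(\du{k},\du{j})=\delta$ for \emph{all} pairs (the paper's own matrix only achieves this for $k<j<t$), and your agent collects a unit reward per step rather than entering an absorbing loop --- both yield the same $(t-2)\delta$ count.
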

Note that $t$ in the statement above is the same as that in the statement of Theorem \ref{thm_cont}.
Theorem \ref{thm_lower_bound} shows that there exists a discount matrix, environment and $\epsilon > 0$ where the regret
due to time-inconsistency is nearly equal to the bound given by Theorem \ref{thm_cont}.
\begin{proof}[\proofof Theorem \ref{thm_lower_bound}]
Define $\d$ by
\eq{
\dt{k}{i} = \begin{cases}
\delta & \text{if } k < i < t \\
0 & \text{otherwise}
\end{cases}
}
Observe that $D(\du{k}, \du{j}) = \delta$ for all $k < j < t$ since $\dt{j}{i} = \dt{k}{i}$ for all $i$ except $i = j$.
Now consider the environment below.
\begin{center}
\tpic{
\tikzstyle{every state}=[fill=none,draw=black,text=black]

\node[state] (a) {\start};
\node[state] (b) [right of=a] {};
\node[state] (c) [right of=b] {};
\node[state,draw=none] (h) [right of=c] {$\cdots$};
\node[state,node distance=\shortnodedist] (i) [right of=h] {};
\node[state] (e) [below of=b] {};
\node[state] (f) [below of=c] {};
\node[state] (j) [below of=i] {};

\path (a) edge node {$0$} (b)
      (b) edge node {$0$} (c)
      (c) edge node {$0$} (h)
      (b) edge node {$1 - \epsilon $} (e)
      (c) edge node {$1 - \epsilon^2$} (f)
      (i) edge node {$1 - \epsilon^{t-1}$} (j)
      (e) edge[loop left] node {$1 - \epsilon$} (e)
      (f) edge[loop below] node {$1 - \epsilon^2$} (f)
      (j) edge[loop right] node {$0$} (j)
      ;
}\end{center}
For sufficiently small $\epsilon$, the agent at time zero will plan to move right and then down leading to $\Rw^*_\du{1}(h_{<1}) = [0, 1 - \epsilon, 1 - \epsilon, \cdots]$ and $V^*_\du{1}(h_{<1}) = (t - 1) \delta (1 - \epsilon)$.

To compute $\Rw_\d$ note that $\dt{k}{k} = 0$ for all $k$. Therefore the agent in time-step $k$ doesn't care about the
next instantaneous reward, so prefers to move right with the intention of moving down in the next time-step when the rewards
are slightly better. This leads to
$\Rw_\d(h_{<1}) = [0, 0, \cdots, 1 - \epsilon^{t-1}, 0, 0, \cdots]$.
Therefore,
\eq{
V^*_\du{1}(h_{<1}) - V^{\pi_\d}_{\du{1}}(h_{<1}) &= (t - 1) \delta (1 - \epsilon) - (1 - \epsilon^{t-1}) \delta
\geq (t - 2)\delta (1 - \epsilon)
}
as required.
\proofend\end{proof}

\section{Game Theoretic Approach}

What should an agent do if it knows it is time inconsistent? One option is to treat its future selves as ``opponents''
in an extensive game. The game has one player per time-step who chooses the action for that time-step only. At the end of
the game the agent will have received a reward sequence $\v r \in \Re^\infty$. The utility given to the $k$th player is
then $\v r \cdot \du{k}$. So
each player in this game wishes to maximise the discounted reward with respect to a different
discounting vector.

\setlength{\intextsep}{0pt}
\begin{wrapfigure}[7]{r}{3.2cm}
\topsep=0.0cm
\begin{center}
\vspace{-0.1cm}
\tpic{
\tikzstyle{every state}=[fill=none,draw=black,text=black]

\node[state] (a) {\start};
\node[state] (b) [below of=a] {};
\node[state] (c) [right of=a] {};
\node[state] (d) [below of=c] {};
\node[state] (e) [right of=c] {};
\node[state] (f) [below of=e] {};

\path (a) edge node {$4$} (b)
      (a) edge node {$1$} (c)
      (c) edge node {$3$} (d)
      (c) edge node {$1$} (e)
      (f) edge[loop below] node {$0$} (f)
      (d) edge[loop below] node {$0$} (d)
      (b) edge[loop below] node {$0$} (b)
      (e) edge node {$3$} (f);
}
\end{center}
\end{wrapfigure}
For example, let $\du{1} = [2, 1, 2,0,0,\cdots]$ and $\du{2} = [*, 3, 1,0,0,\cdots]$ and consider the environment on the right.
Initially, the agent has two choices. It can either move down to guarantee a reward sequence
of $\v r = [4, 0, 0, \cdots]$ which
has utility of $\du{1} \cdot [4, 0, 0, \cdots] = 8$ or it can move right in which case it will receive a reward sequence of
either $\v r' = [1, 3, 0,0,\cdots]$ with utility $5$ or
$\v r'' = [1, 1, 3, 0,0, \cdots]$ with utility $9$. Which of these two reward sequences it receives is determined by the action taken
in the second time-step. However this
action is chosen to maximise utility with respect to discount sequence $\du{2}$ and $\du{2} \cdot \v r' > \du{1} \cdot \v r''$.
This means that if at time $1$ the agent chooses to move right, the final reward sequence will be $[1, 3, 0,0,\cdots]$ and the
final utility with respect to $\du{1}$ will be $5$. Therefore the rational thing to do in time-step 1 is to move down immediately
for a utility of $8$.

The technique above is known as backwards induction which is used to find sub-game perfect equilibria in finite
extensive games. A variant of Kuhn's theorem proves that backwards induction can be used to find such equilibria in
finite extensive games \cite{OR94}. For arbitrary extensive games (possibly infinite) a sub-game perfect
equilibrium need not exist, but we prove a theorem for our particular class of infinite games.

A sub-game perfect equilibrium policy is one the players could agree to play, and subsequently have no incentive to
renege on their agreement during play. It isn't always philosophically clear that a sub-game perfect equilibrium
policy {\it should} be played. For a deeper discussion, including a number of good examples, see \cite{OR94}.

\begin{definition}[Sub-game Perfect Equilibria]
A policy $\pi^*_\d$ is a sub-game perfect equilibrium policy if and only if for each $t$
$V^{\pi^*_\d}_\du{t}(h_{<t}) \geq V^{\tilde \pi}_\du{t}(h_{<t}), \text{ for all } h_{<t}$,
where $\tilde \pi$ is any policy satisfying $\tilde \pi(h_{<i}) = \pi^*_\d(h_{<i}) \forall h_{<i}$ where $i \neq t$.
\end{definition}

\begin{theorem}[Existence of Sub-game Perfect Equilibrium Policy]\label{thm_existence2}
For all environments and discount matrices $\d$ satisfying Assumptions \ref{assumption} and \ref{assumption2}
there exists at least one sub-game perfect equilibrium policy $\pi^{*}_{\d}$.
\end{theorem}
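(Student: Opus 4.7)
The plan is to construct $\pi^{*}_{\d}$ by performing backward induction on a sequence of finite-horizon truncations of the extensive game and then extracting a convergent subsequence of the resulting policies, in the spirit of the classical proof of Kuhn's theorem adapted to the infinite-horizon setting. Concretely, for each horizon $T \in \N$ define the truncated value $V^{T,\pi}_{\du{t}}(h_{<t}) := \sum_{i=t}^{T} \dt{t}{i} R^{\pi}(h_{<t})_i$ that ignores rewards after time $T$, and construct a policy $\pi^{T}$ by choosing $\pi^{T}(h_{<s})$ arbitrarily for $s > T$ and, for $s = T, T-1, \ldots, 1$, extending $\pi^{T}$ to each length-$(s-1)$ history $h_{<s}$ by picking any action that maximises the expected truncated value at $h_{<s}$ given the already-determined choices at later times. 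The argmax exists because $\A$ is finite by Assumption~\ref{assumption}, and by construction $\pi^{T}$ satisfies the truncated SPE inequality $V^{T,\pi^{T}}_{\du{t}}(h_{<t}) \geq V^{T,\tilde \pi}_{\du{t}}(h_{<t})$ for every $t \leq T$, every $h_{<t}$, and every deviation $\tilde \pi$ that agrees with $\pi^{T}$ at all histories of length other than $t-1$.

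Next I pass to a limit. Because $\A$ and $\O$ are finite and rewards are deterministic given history and action, the set of reachable histories is countable, so the space of policies restricted to reachable histories is a countable product of $\A$ and therefore compact in the pointwise topology. I extract a subsequence $\pi^{T_n}$ converging pointwise to a candidate $\pi^{*}_{\d}$. To check the SPE property, fix $t$, $h_{<t}$, and a deviation $\tilde \pi$ agreeing with $\pi^{*}_{\d}$ outside length-$(t-1)$ histories, and let $\tilde \pi^{T_n}$ denote $\pi^{T_n}$ modified to match $\tilde \pi$ at those histories. For $T_n \geq t$ the truncated SPE inequality gives $V^{T_n,\pi^{T_n}}_{\du{t}}(h_{<t}) \geq V^{T_n,\tilde \pi^{T_n}}_{\du{t}}(h_{<t})$, so it only remains to pass to the limit in $n$.

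The main obstacle is precisely this limit passage, i.e.\ showing that $V^{T_n,\pi^{T_n}}_{\du{t}}(h_{<t}) \to V^{\pi^{*}_{\d}}_{\du{t}}(h_{<t})$ and similarly for $\tilde \pi^{T_n}$. For each fixed cutoff $M$, the partial sum $\sum_{i=t}^{M} \dt{t}{i} R^{\pi^{T_n}}(h_{<t})_i$ converges to the corresponding partial sum for $\pi^{*}_{\d}$ because it depends only on finitely many policy values and those stabilise along the subsequence. The delicate piece is the tail $\sum_{i>M} \dt{t}{i} R^{\pi^{T_n}}(h_{<t})_i$, which Assumption~\ref{assumption2} only controls for a fixed policy; to get uniformity in $n$ I would dominate it by $\sum_{h_{<M+1}} P(h_{<M+1} \mid \pi^{T_n}) V^{*}_{\du{t}}(h_{<M+1})$, well defined by Theorem~\ref{thm_existence}, and then bound this quantity uniformly in $n$ via a worst-case argument analogous to the one used in the appendix proof of Theorem~\ref{thm_existence}. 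With this uniform tail control in hand, the limit passage yields $V^{\pi^{*}_{\d}}_{\du{t}}(h_{<t}) \geq V^{\tilde \pi}_{\du{t}}(h_{<t})$, completing the verification that $\pi^{*}_{\d}$ is a sub-game perfect equilibrium.
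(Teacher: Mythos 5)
Your overall architecture --- backward induction on finite-horizon truncations, compactness of the policy space, extraction of a convergent subsequence, and a limit passage in the one-shot deviation inequality --- is exactly the skeleton of the paper's proof, and your construction of $\pi^{T}$ and of the limit candidate $\pi^{*}_{\d}$ is fine. The gap is at the point you yourself flag as delicate: the uniform-in-$n$ control of the tails $\sum_{i>M}\dt{t}{i}R^{\pi^{T_n}}(h_{<t})_i$ and of the discrepancy between $V^{T_n,\pi^{T_n}}$ and $V^{\pi^{T_n}}$. Assumption~\ref{assumption2} is a pointwise statement, one limit per fixed policy, and the dominating quantity you propose, $\sum_{h_{<M+1}}P(h_{<M+1}\mid \pi^{T_n})V^{*}_{\du{t}}(h_{<M+1})$, still depends on $n$ through the transition probabilities. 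Nothing in the appendix proof of Theorem~\ref{thm_existence} supplies a worst-case bound that is uniform over policies --- that proof is just the extreme value theorem applied to one continuous function on a compact space --- so the sentence ``bound this quantity uniformly in $n$ via a worst-case argument analogous to\dots'' names the crux rather than resolving it. A uniform bound is in fact provable (the tail sums are non-increasing in $M$ because rewards are non-negative, each is continuous in $\pi$, and they converge pointwise to $0$ by Assumption~\ref{assumption2}, so a Dini-type argument on the compact space $\Pi$ yields uniform convergence), but this argument has to be made; as written the proof does not close.

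The paper sidesteps the uniformity problem with a device you might adopt instead: it adjoins an action $a^{death}$ yielding zero reward forever and arranges that each finite-horizon equilibrium policy $\pi_{t_i}$ plays $a^{death}$ from time $t_i$ onward. The deviation policies $\tilde\pi_{t_i}$ then have identically zero tail in the \emph{original} environment, so $V^{\tilde\pi_{t_i}}_{\du{t}}(h_{<t})$ equals the truncated value exactly, and the remaining error terms $|V^{\pi}_{\du{t}}-V^{\pi_{t_i}}_{\du{t}}|$ and $|V^{\tilde\pi_{t_i}}_{\du{t}}-V^{\tilde\pi}_{\du{t}}|$ are handled by continuity of $\pi\mapsto V^{\pi}$ (Lemma~\ref{lem_cont}) along the two convergent sequences. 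Either route can be made to work, but yours needs the uniform tail lemma spelled out, whereas the paper's needs only the continuity lemma it has already proved.
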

Many results in the literature of game theory almost prove this theorem. Our setting is more difficult than most
because we have countably many players (one for each time-step) and exogenous uncertainty. Fortunately, it is made easier by the
very particular conditions on the preferences of players for rewards
that occur late in the game (Assumption \ref{assumption2}). The closest related work appears to be that
of Drew Fudenberg in \cite{Fud83}, but our proof (see appendix) is very different.
The proof idea is to consider a sequence of environments identical to the original environment but with an increasing
bounded horizon after which reward
is zero. By Kuhn's Theorem \cite{OR94} a sub-game perfect equilibrium policy must exist in each of these
finite games. However the space of policies is compact (Lemma \ref{lem_compact}) and so this sequence of sub-game perfect equilibrium
policies contains a convergent sub-sequence converging to policy $\pi$. It is not then hard to show that
$\pi$ is a sub-game prefect equilibrium policy in the original environment.

\begin{proof}[\proofof Theorem \ref{thm_existence2}]
Add an action $a^{death}$ to $\A$ and $\mu$ such that if $a^{death}$ is taken at any time in $h_{<t}$ then $\mu$ returns zero reward.
Essentially, once in the agent takes action $a^{death}$, the agent receives zero reward forever.
Now if $\pi^*_\d$ is a sub-game perfect equilibrium policy in this modified environment then it is a sub-game perfect
equilibrium policy in the original one.

For each $t \in \N$ choose $\pi_t$ to be a sub-game perfect equilibrium policy in the further modified
environment obtained by
setting $r_i = 0$ if $i > t$. That is, the environment which gives zero reward always after time $t$.
We can assume without loss of generality that $\pi_t(h_{<k}) = a^{death}$ for all $k \geq t$.
Since $\Pi$ is compact, the sequence $\pi_1, \pi_2, \cdots$ has a convergent subsequence $\pi_{t_1}, \pi_{t_2}, \cdots$
converging to $\pi$ and satisfying
\begin{enumerate}
\item  $\pi_{t_i}(h_{<k}) = \pi(h_{<k}), \text{ for all } h_{<k}$ where $k \leq i$.
\item $\pi_{t_i}$ is a sub-game perfect equilibrium policy in the modified environment with reward $r_k = 0$ if $k > t_i$.
\item $\pi_{t_i}(h_{<t_i}) = a_{death}$.
\end{enumerate}
We write $\tilde V^{\pi_{t_i}}$ for the value function in the modified environment. It is now shown that $\pi$ is
a sub-game perfect equilibrium policy in the original environment. Fix a $t \in \N$ and
let $\tilde \pi$ be a policy with $\tilde \pi(h_{<k}) = \pi(h_{<k})$ for all $h_{<k}$ where $k \neq t$.
Now define policies $\tilde \pi_{t_i}$ by
\eq{
\tilde \pi_{t_i}(h_{<k}) = \begin{cases}
\tilde \pi(h_{<k}) & \text{if } k \leq i  \\
\pi_{t_i}(h_{<k}) & \text{otherwise }
\end{cases}
}
By point 1 above, $\tilde \pi_{t_i}(h_{<k}) = \pi_{t_i}(h_{<k})$ for all $h_{<k}$ where $k \neq t$.
Now for all $i > t$ we have
\eqn{
\label{eqn-ex1} V^\pi_\du{t}(h_{<t}) &\geq V^{\pi_{t_i}}_\du{t}(h_{<t}) - |V^\pi_\du{t}(h_{<t}) - V^{\pi_{t_i}}_\du{t}(h_{<t})| \\
\label{eqn-ex2} &\geq \tilde V^{\pi_{t_i}}_\du{t}(h_{<t}) - |V^\pi_\du{t}(h_{<t}) - V^{\pi_{t_i}}_\du{t}(h_{<t})| \\
\label{eqn-ex3} &\geq \tilde V^{\tilde \pi_{t_i}}_\du{t}(h_{<t}) - |V^\pi_\du{t}(h_{<t}) - V^{\pi_{t_i}}_\du{t}(h_{<t})| \\
\nonumber &\geq V^{\tilde \pi}_\du{t}(h_{<t}) - |V^\pi_\du{t}(h_{<t}) - V^{\pi_{t_i}}_\du{t}(h_{<t})| \\
\label{eqn-ex4}&\quad - |V^{\tilde \pi_{t_i}}_\du{t}(h_{<t}) - \tilde V^{\tilde \pi_{t_i}}_\du{t}(h_{<t})| -
|V^{\tilde \pi_{t_i}}_\du{t}(h_{<t}) - V^{\tilde \pi}_\du{t}(h_{<t})|
}
where (\ref{eqn-ex1}) follows from arithmetic. (\ref{eqn-ex2}) since $V \geq \tilde V$. (\ref{eqn-ex3}) since $\pi_{t_i}$ is
a sub-game perfect equilibrium policy. (\ref{eqn-ex4}) by arithmetic.
We now show that the absolute value terms in (\ref{eqn-ex4}) converge to zero.
Since $V^\pi(\cdot)$ is continuous in $\pi$ and $\lim_{i\to\infty} \pi_{t_i} = \pi$ and $\lim_{i\to\infty} \tilde \pi_{t_i} = \tilde \pi$, we obtain
$\lim_{i\to \infty} \left[|V^\pi_\du{t}(h_{<t}) - V^{\pi_{t_i}}_\du{t}(h_{<t})| + |V^{\tilde \pi_{t_i}}_\du{t}(h_{<t}) - V^{\tilde \pi}_\du{t}(h_{<t})|  \right] = 0$.
Now $\tilde \pi_{t_i}(h_{<k}) = a^{death}$ if $k \geq t_i$, so
$|V^{\tilde \pi_{t_i}}(h_{<t}) - \tilde V^{\tilde \pi_{t_i}}(h_{<t})| = 0$.
Therefore taking the limit as $i$ goes to infinity in (\ref{eqn-ex4}) shows that
$V^\pi_{\du{t}}(h_{<t}) \geq V^{\tilde \pi}_\du{t}(h_{<t})$
as required.
\proofend\end{proof}
In general, $\pi^*_\d$ need not be unique, and different sub-game equilibrium policies can lead to different utilities.
This is a normal, but unfortunate,
problem with the sub-game equilibrium solution concept. The policy is unique if for all players the value of any two arbitrary  policies is
different. Also, if
$\forall k (V^{\pi_1}_\du{k} = V^{\pi_2}_\du{k} \implies \forall j V^{\pi_1}_\du{j} = V^{\pi_2}_\du{j})$ is true
then the non-unique sub-game
equilibrium policies have the same values for all agents. Unfortunately, neither of these conditions is
necessarily satisfied in our setup. The problem of how players might choose a sub-game perfect equilibrium
policy appears surprisingly understudied. We feel it provides another reason to avoid the
situation altogether by using time-consistent discount matrices.
The following example illustrates the problem of non-unique sub-game equilibrium policies.
\begin{example}
Consider the example in Section 3 with an agent using a constant horizon discount matrix with $H=2$. There are
exactly two sub-game perfect equilibrium policies, $\pi_1$ and $\pi_2$ defined by,
\eq{
\pi_1(h_{<t}) &= \begin{cases}
up& \text{if } t \text{ is odd} \\
right& \text{otherwise}
\end{cases} &
\pi_2(h_{<t}) &= \begin{cases}
up& \text{if } t \text{ is even} \\
right& \text{otherwise}
\end{cases}
}
Note that the reward sequences (and values) generated by $\pi_1$ and $\pi_2$ are different with $\Rw^{\pi_1}(h_{<1}) =
[1/2, 0, 0, \cdots]$ and $\Rw^{\pi_2}(h_{<1}) = [0, 2/3,0,0,\cdots]$.
If the players choose to play a sub-game perfect equilibrium policy then the first player can choose between $\pi_1$
and $\pi_2$ since they have the first move. In that case it would be best to follow $\pi_2$ by moving right as it
has a greater return for the agent at time $0$ than $\pi_1$.
\end{example}
For time-consistent discount matrices we have the following proposition.
\begin{proposition}
If $\d$ is time-consistent then $V^*_\du{k} =V^{\pi_\d}_\du{k} = V^{\pi^*_\d}_\du{k}$  for all $k$ and choices
of $\pi^*_\du{k}$ and $\pi_\d$ and $\pi^*_\d$.
\end{proposition}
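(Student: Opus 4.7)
The plan is to use Theorem \ref{thm_main} to collapse all three quantities to scalar multiples of the same reference value $V^*_{\du{1}}$, and then apply a Bellman/telescoping argument to verify each policy is globally optimal.

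Step 1 is the reduction. Theorem \ref{thm_main} yields scalars $\alpha_k > 0$ with $\dt{k}{i} = \alpha_k \dt{1}{i}$ for all $i \geq k$, where positivity follows from the requirement that some $\dt{k}{i} > 0$ for $i \geq k$. Since $R^\pi(h_{<t})_i = 0$ for $i < t$, this gives $V^\pi_{\du{k}}(h_{<t}) = \alpha_k V^\pi_{\du{1}}(h_{<t})$ for every $\pi$ and every $h_{<t}$ with $t \geq k$. In particular $V^*_{\du{k}} = \alpha_k V^*_{\du{1}}$, and the sets of $\du{k}$- and $\du{1}$-optimal policies from any such $h_{<t}$ coincide, so the proposition reduces to verifying $V^{\pi_\d}_{\du{1}}(h_{<t}) = V^{\pi^*_\d}_{\du{1}}(h_{<t}) = V^*_{\du{1}}(h_{<t})$ for every reachable $h_{<t}$.

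For the mixed policy, at each $h_{<t}$ the action $\pi_\d(h_{<t}) = \pi^*_{\du{t}}(h_{<t})$ is the first move of an optimal policy for $\du{t}$, hence (by Step 1) also of an optimal policy for $\du{1}$, starting from $h_{<t}$. Writing the one-step Bellman decomposition of $V^\pi_{\du{1}}$ at this common optimal action for both $V^*_{\du{1}}$ and $V^{\pi_\d}_{\du{1}}$ and subtracting, the difference $V^*_{\du{1}}(h_{<t}) - V^{\pi_\d}_{\du{1}}(h_{<t})$ telescopes into an expectation of $V^*_{\du{1}}(h_{1:T}) - V^{\pi_\d}_{\du{1}}(h_{1:T})$ under $\pi_\d$; I then send $T \to \infty$ and apply Assumption \ref{assumption2} to both terms to force the difference to vanish. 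For $\pi^*_\d$, the sub-game perfect equilibrium definition says $\pi^*_\d(h_{<t})$ is one-step optimal under $\du{t}$ when the continuation follows $\pi^*_\d$; by $\du{t} \propto \du{1}$ this transfers to one-step optimality under $\du{1}$, so $V^{\pi^*_\d}_{\du{1}}$ satisfies the Bellman optimality equation. Running the analogous telescoping comparison against an arbitrary $\tilde\pi$, with Assumption \ref{assumption2} again killing the tail, gives $V^{\pi^*_\d}_{\du{1}} \geq V^{\tilde\pi}_{\du{1}}$ for all $\tilde\pi$, and the reverse inequality is immediate from the definition of $V^*_{\du{1}}$.

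The main obstacle is the infinite-horizon limit in the two telescoping arguments: $\du{1}$ may be non-summable, so no standard contraction-based Bellman uniqueness argument applies. Assumption \ref{assumption2} is precisely the hypothesis needed to close each recursion by forcing the tail of the telescope to vanish; every other ingredient is immediate from linearity and Theorem \ref{thm_main}.
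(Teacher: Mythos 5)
The paper offers no proof of this proposition, so your argument has to stand on its own. Your Step 1 reduction is correct ($\alpha_k>0$ and $V^\pi_{\du{k}}(h_{<t})=\alpha_k V^\pi_{\du{1}}(h_{<t})$ for $t\ge k$), and for \emph{summable} discount vectors the rest goes through: the tail terms left over by your two telescoping arguments are then bounded by $\sum_{i\ge T}\dt{1}{i}$ uniformly in all policies involved, since $\Re=[0,1]$. The gap is in your appeal to Assumption \ref{assumption2} in the non-summable case. After telescoping, the term you must kill is $\sum_{h_{<T}} P(h_{<T}\mid\pi_\d)\,V^{*}_{\du{1}}(h_{<T})$ (and, in the sub-game perfect part, an analogous cross term with $\tilde\pi$ generating the trajectory and $\pi^*_\d$ being evaluated). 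Assumption \ref{assumption2} only asserts $\sum_{h_{<T}} P(h_{<T}\mid\pi)\,V^{\pi}_{\du{k}}(h_{<T})\to 0$ with the \emph{same} policy in both places; your term equals $\sum_{h_{<T}} P(h_{<T}\mid\sigma_T)\,V^{\sigma_T}_{\du{1}}(h_{<T})$ for a $T$-dependent spliced policy $\sigma_T$, and pointwise convergence over a family of policies does not give convergence along that diagonal.

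The gap is not cosmetic. Take $\dt{k}{t}=1$ for all $t\ge k$, which is time-consistent by Theorem \ref{thm_main}, and an environment with a hub state in which the agent may either stay (reward $0$, remain in the hub) or exit into a deterministic tail paying rewards $2^{-1},2^{-2},\dots$ from that point on. Every policy satisfies Assumption \ref{assumption2}, and from every hub history both actions are first moves of optimal policies (value $1$ either way). The arbitrary tie-breaking rule is allowed to pick, for each $t$, an optimal $\pi^*_{\du{t}}$ that stays at time $t$; then $\pi_\d$ stays forever and $V^{\pi_\d}_{\du{1}}(h_{<1})=0<1=V^*_{\du{1}}(h_{<1})$. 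So for non-summable columns the telescoping cannot close, and the proposition read literally (``for all choices'') is itself in jeopardy; your proof is sound only under the additional hypothesis that $\du{1}$ is summable, or some uniform-in-$\pi$ strengthening of Assumption \ref{assumption2}. You should state that restriction explicitly rather than asserting that Assumption \ref{assumption2} is ``precisely the hypothesis needed.''
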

Is it possible that backwards induction is simply expected discounted reward
maximisation in another form? The following theorem shows this is not the case and that
sub-game perfect equilibrium policies are a rich and interesting class worthy of further study in
this (and more general) settings.
\begin{theorem}\label{thm_non_eq}
$\exists \d\text{ such that }\pi^*_\d \neq \pi^*_\tdu{0}, \text{ for all } \tdu{0}$.
\end{theorem}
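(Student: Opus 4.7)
The plan is to exhibit one environment together with a time-inconsistent $\d$ for which no single $\tdu{0}$ gives $\pi^*_\tdu{0}=\pi^*_\d$. Concretely, fix $\d$ with $\dt{1}{1}=0$, $\dt{1}{2}=\dt{1}{3}=1$, $\dt{2}{2}=1$ and $\dt{2}{3}=1/2$ (and zeros everywhere else). The pair $(\dt{2}{2},\dt{2}{3})$ is not proportional to $(\dt{1}{2},\dt{1}{3})$, so $\d$ is time-inconsistent by Theorem~\ref{thm_main}. The underlying idea is that matching $\pi^*_\d$ at every history forces $\tdu{0}$ to behave both like $\du{1}$ on time-$1$ comparisons and like $\du{2}$ on time-$2$ comparisons, and time-inconsistency will make these two demands pin $\tilde d_2/\tilde d_3$ to two disjoint intervals.

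I would take a deterministic environment with decisions only at times $1$ and $2$. At time $1$ the agent chooses one of $n$ ``short'' actions $A_1,\dots,A_n$ or one ``long'' action $B$. Each $A_k$ has a forced continuation paying $(\alpha_k,\beta_k)\in[0,1]^2$ at times $2,3$ and zero thereafter, where the pairs discretise a short arc of a strictly concave curve whose tangent at the index $k^{*}=\arg\max_k(\alpha_k+\beta_k)$ has slope $-1$. An elementary chord calculation then shows $k^{*}$ is the unique maximiser of $\tilde d_2\alpha_k+\tilde d_3\beta_k$ precisely when $\tilde d_2/\tilde d_3$ lies in an open interval $I_1$ around $1$ of width $O(1/n)$. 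The action $B$ yields a small time-$1$ reward $\rho>0$ and leads to a state where player $2$ chooses among $C_1,\dots,C_m$ with time-$(2,3)$ rewards $(\gamma_l,\delta_l)$ along another concave arc whose tangent at $l^{*}=\arg\max_l(\gamma_l+\delta_l/2)$ has slope $-2$, so that $C_{l^*}$ is the unique $\tdu{0}$-maximiser only when $\tilde d_2/\tilde d_3$ lies in an interval $I_2$ around $2$ of width $O(1/m)$. The scale of the two curves and $\rho$ are chosen so that $\alpha_{k^*}+\beta_{k^*}>\gamma_{l^*}+\delta_{l^*}$; since $\dt{1}{1}=0$, this makes player $1$'s SPE pick $A_{k^*}$ rather than $B$ regardless of $\rho$.

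To conclude, I would case-split on $\pi^*_\tdu{0}(h_B)$ at the off-equilibrium history $h_B$ reached via $B$. If $\pi^*_\tdu{0}(h_B)\ne C_{l^*}$ then $\pi^*_\tdu{0}\ne\pi^*_\d$ at $h_B$ and we are done. Otherwise $\tdu{0}$'s continuation coincides with the SPE at $h_B$, so $C_{l^*}$ being $\tdu{0}$-optimal forces $\tilde d_2/\tilde d_3\in I_2$; matching $A_{k^*}$ at the initial history would additionally require $\tilde d_2/\tilde d_3\in I_1$. Choosing $n$ and $m$ large makes $I_1\cap I_2=\emptyset$, so $\pi^*_\tdu{0}$ cannot pick $A_{k^*}$ at time $1$ and again disagrees with $\pi^*_\d$. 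The main obstacle is ruling out degenerate $\tdu{0}$ for which the time-$(2,3)$ comparisons all collapse to ties, in particular $\tdu{0}$ concentrated on $\tilde d_1$; the small reward $\rho>0$ handles exactly that case, since then $B$ is strictly preferred to every $A_k$ under $\tdu{0}$ while the SPE picks $A_{k^*}$. Strict concavity of both discretised curves removes the remaining ties, so no tie-breaking rule can rescue the match and we obtain $\pi^*_\d\ne\pi^*_\tdu{0}$ for every $\tdu{0}$.
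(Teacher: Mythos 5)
Your construction is correct in substance but takes a genuinely different route from the paper's. The paper builds a \emph{stochastic} environment: the first action leads with probability $1/2$ to one of two copies of the three-reward example opening Section~5, differing in a single reward, and the point is that the sub-game perfect equilibrium conditions its later choice on which copy was reached according to $\du{2}$'s trade-off, which no single expectimax policy can reproduce jointly with the first-step choice. You instead stay deterministic and extract the incompatibility from an \emph{off-equilibrium} subtree: player~$2$'s choice of $C_{l^*}$ at $h_B$ pins $\tilde d_2/\tilde d_3$ into a normal-cone interval around $2$, player~$1$'s choice of $A_{k^*}$ pins it into one around $1$, and the supporting-hyperplane argument makes these disjoint. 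Both arguments work; yours is heavier than necessary (two or three points per arc already give disjoint normal cones, so the $O(1/n)$ bookkeeping can be dropped), but it has the merit of treating the tie cases explicitly, which the paper's sketch glosses over, and your $\rho$ device for $\tdu{0}$ concentrated on $\tilde d_1$ is a nice touch.

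One residual caveat applies to your proof, to the paper's sketch, and really to the theorem statement itself: under the paper's convention that $\pi^*_{\tdu{0}}$ is an \emph{arbitrary} selection from the argmax, a degenerate $\tdu{0}$ with $\tilde d_1=\tilde d_2=\tilde d_3=0$ but $\tilde d_t>0$ for some $t\ge 4$ makes every policy optimal in your environment (all rewards after time $3$ vanish), so some admissible selection of $\pi^*_{\tdu{0}}$ coincides with $\pi^*_\d$. Your argument only rules out $\tilde d_1>0$ via $\rho$. This is not a defect relative to the paper --- its claim that any expectimax search behaves identically in both sub-environments fails for the same degenerate vectors --- but a watertight version should either exclude such $\tdu{0}$ or insert a nontrivial reward choice at every time step.
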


The result is proven using a simple counter-example. The idea is to construct a stochastic environment where the first
action leads the agent to one of two sub-environments, each with probability half. These environments are identical
to the example at the start of this section, but one of them has the reward $1$ (rather than $3$) for the history $right,down$.
It is then easily shown that $\pi^*_\du{}$ is not the result of an expectimax expression because it behaves differently in each
sub-environment, while any expectimax search (irrespective of discounting) will behave the same in each.

\section{Discussion}

\subsubsect{Summary}
Theorem \ref{thm_main} gives a characterisation of time-(in)consistent discount matrices and shows that all
time-consistent discount matrices follow the simple form of $\dt{k}{t} = \dt{1}{t}$.
Theorem \ref{thm_cont} shows
that using a discount matrix that is nearly time-consistent produces mixed policies with low regret. This is useful
for a few reasons, including showing that small perturbations, such as rounding errors, in a discount matrix cannot cause
major time-inconsistency problems. It also shows that ``cutting off'' time-consistent discount matrices after some fixed
depth - which makes the agent potentially time-inconsistent - doesn't affect the policies too much, provided the depth is large enough.
When a discount matrix is very time-inconsistent then taking a game theoretic approach may dramatically decrease
the regret in the change of policy over time.

Some comments on the policies $\pi^*_\du{k}$ (policy maximising expected $\du{k}$-discounted reward), $\pi_\d$ (mixed
policy using $\pi^*_\du{k}$ at each time-step $t$)
and $\pi^*_\d$ (sub-game perfect equilibrium policy).
\begin{enumerate}
\item A time-consistent agent should play policy $\pi^*_\du{k} = \pi_\d$ for any $k$. In this case, every optimal policy
$\pi^*_\du{k}$ is also a sub-game perfect equilibrium policy.
\item $\pi_\d$ will be played by an agent that believes it is time-consistent, but may not be. This can lead to very bad
behavior as shown in Section 3.
\item An agent may play $\pi^*_\d$ if it knows it is time-inconsistent, and also knows exactly how
(I.e, it knows $\du{k}$ for all $k$ at every time-step). This policy is arguably rational, but comes with its own
problems, especially non-uniqueness as discussed.
\end{enumerate}

\subsubsect{Assumptions}
We made a number of assumptions about which we make some brief comments.
\begin{enumerate}
\item Assumption \ref{assumption}, which states that $\A$ and $\O$ are finite, guarantees the existence of an optimal policy. Removing the
assumption would force us to use $\epsilon$-optimal policies, which shouldn't be a problem for the theorems to go through
with an additive $\epsilon$ slop term in some cases.
\item Assumption \ref{assumption2} only affects non-summable discount vectors. Without it,
 even $\epsilon$-optimal policies need not exist and all the machinery will break down.
\item The use of discrete time greatly reduced the complexity of the analysis. Given a sufficiently general
model, the set of continuous environments should contain all discrete environments. For this reason the proof of
Theorem \ref{thm_main} should go through essentially unmodified. The same may not be true for Theorems \ref{thm_cont} and
\ref{thm_existence2}. The former may be fixable with substantial effort (and perhaps should be true intuitively). The latter
has been partially addressed, with a positive result in \cite{Gol80,BM73,Pol68,Str55}.
\end{enumerate}


\begin{small}

\end{small}

\appendix
\section{Technical Proofs}

Before the proof of Theorem \ref{thm_existence} we require a definition and two lemmas.
\begin{definition}\label{defn_policy_space}
Let $\Pi = \A^{\S}$ be the set of all policies and define a metric $D$ on $\Pi$ by
$T(\pi_1, \pi_2) := \min_{t\in\N} \left\{t :\exists h_{<t} \text{ s.t } \pi_1(h_{<t})\neq \pi_2(h_{<t})  \right\} \text{ or } \infty \text { if } \pi_1 = \pi_2$ and
$D(\pi_1, \pi_2) := \exp(-T(\pi_1, \pi_2))$.
\end{definition}
$T$ is the time-step at which $\pi_1$ and $\pi_2$ first differ.
Now augment $\Pi$ with the topology induced by the metric $\d$.

\begin{lemma}\label{lem_compact}
$\Pi$ is compact.
\end{lemma}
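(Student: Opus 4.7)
The plan is to prove sequential compactness of $(\Pi, D)$ by a diagonal extraction, which is equivalent to compactness since the space is metric. At its core this is Tychonoff's theorem for a countable product of finite discrete spaces: $D$ is engineered so that two policies are close precisely when they agree on all histories of bounded length, and the alphabet $\A$ is finite. The delicate point is that the nominal domain of a policy is $\H = (\A \times \Re \times \O)^*$ with $\Re = [0,1]$ uncountable, so one must first argue that only countably many coordinates actually matter.

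First I would restrict to the essentially countable structure. Since $\A$ and $\O$ are finite by Assumption \ref{assumption} and rewards are deterministic functions of the preceding history and action, every reachable history is determined by its action--observation projection, and the set of reachable histories is countable. Enumerate them $h^{(1)}, h^{(2)}, \ldots$ in any order that is non-decreasing in length. Two policies that agree on all reachable histories of length $< t$ cannot be distinguished by $T$, so without loss of generality one may identify $\Pi$ with $\A^{\{h^{(i)}\}_{i \in \N}}$, a countable product of copies of the finite discrete space $\A$.

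Next, given an arbitrary sequence $(\pi_n)_{n\in\N} \subseteq \Pi$, I would apply the usual diagonal argument. Since $\A$ is finite, some action $a_1 \in \A$ satisfies $\pi_n(h^{(1)}) = a_1$ for infinitely many $n$; let $N_1 \subseteq \N$ be the corresponding infinite index set. Iterating over $i = 2, 3, \ldots$ produces nested infinite sets $N_1 \supseteq N_2 \supseteq \cdots$ and actions $a_i \in \A$ with $\pi_n(h^{(i)}) = a_i$ for every $n \in N_i$. Set $\pi(h^{(i)}) := a_i$ and let $n_k$ denote the $k$-th element of $N_k$. To check $\pi_{n_k} \to \pi$ in $D$, fix $\epsilon > 0$, pick $T_0 > -\log \epsilon$, and choose $K$ large enough that every reachable history of length $< T_0$ appears among $h^{(1)}, \ldots, h^{(K)}$; this is possible because only finitely many action--observation sequences have length $< T_0$. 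For $k \geq K$ the policy $\pi_{n_k}$ agrees with $\pi$ on $h^{(1)}, \ldots, h^{(K)}$ and therefore on every reachable history of length $< T_0$, so $T(\pi_{n_k}, \pi) \geq T_0$ and $D(\pi_{n_k}, \pi) \leq e^{-T_0} < \epsilon$.

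The main obstacle, as noted, is the cardinality/domain issue: making precise the passage from the uncountable nominal domain $\H$ to the countable reachable domain on which the metric $D$ is truly determined. Once this reduction is in place the remainder is a textbook diagonal extraction (equivalently, Tychonoff for a countable product of finite discrete spaces).
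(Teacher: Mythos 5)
Your proof is correct, but it takes a different route from the paper's. The paper establishes compactness by showing $(\Pi,D)$ is totally bounded (for each $t$, the relation ``agree on all histories of length $<t$'' partitions $\Pi$ into finitely many classes, whose representatives form a finite $e^{-t}$-net) and complete (a Cauchy sequence eventually stabilises on each finite horizon, and the limit is read off as $\pi(h_{<t}) := \pi_t(h_{<t})$), then invokes the metric-space Heine--Borel theorem. You instead prove sequential compactness directly by a diagonal extraction over an enumeration of histories, which in a metric space is equivalent. The two arguments rest on the same finiteness fact --- only finitely many relevant histories of each bounded length --- and your nested-subsequence construction is essentially the same mechanism the paper uses to build the Cauchy limit; the difference is whether one packages it as ``totally bounded $+$ complete'' or as ``every sequence has a convergent subsequence.'' Neither buys more generality here. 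One genuine contribution of your write-up is that you confront the cardinality issue head-on: since $\Re=[0,1]$, the nominal domain $\H=(\A\times\Re\times\O)^*$ is uncountable, and the paper's claim that $\Pi/_\sim$ is finite (equivalently, your claim that finitely many coordinates of length $<T_0$ suffice) is only true after policies are identified on the countable set of reachable histories, using the paper's standing assumption that $r_t$ is a deterministic function of $h_{<t}$ and $a_t$. The paper glosses over this by writing $\Pi=\A^{\S}$ with $\S$ the reachable state space; your explicit reduction makes that step precise. Note only that, as literally defined, $T(\pi_1,\pi_2)$ quantifies over all $h_{<t}$, so your identification implicitly re-reads the metric as ranging over reachable histories --- which is clearly the intended reading, and the one under which Lemma~\ref{lem_cont} and Theorem~\ref{thm_existence} use the lemma.
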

\begin{proof}
We proceed by showing $\Pi$ is totally bounded and complete. Let $\epsilon = \exp(-t)$ and define an equivalence relation by
$\pi \sim \pi' \text{ if and only if } T(\pi_1, \pi_2) \geq t$.
If $\pi \sim \pi'$ then $D(\pi, \pi') \leq \epsilon$.
Note that $\Pi /_\sim$ is finite. Now choose a representative from each class to create a finite set $\bar \Pi$.
Now $\bigcup_{\pi \in \bar \Pi} B_\epsilon(\pi) = \Pi$, where $B_\epsilon(\pi)$ is the ball
of radius $\epsilon$ about $\pi$. Therefore $\Pi$ is totally bounded.

Next, to show $\Pi$ is complete.
Let $\pi_1, \pi_2, \cdots$ be a Cauchy sequence with $D(\pi_i, \pi_{i+j}) < \exp(-i)$ for all $j > 0$.
Therefore $\pi_i(h_{<k}) = \pi_{i+j}(h_{<k}) \forall h_{<k}$ with $k \leq i$, by the definition of $D$.
Now define
$\pi$ by $\pi(h_{<t}) := \pi_t(h_{<t})$ and note that $\pi_i(h_{<j}) = \pi(h_{<j}) \forall j \leq i$ since $\pi_i(h_{<k}) = \pi_k(h_{<k}) \equiv \pi(h_{<k})$ for $k \leq i$. Therefore
$\lim_{i \to\infty} \pi_i = \pi$ and so $\Pi$ is complete.
Finally, $\Pi$ is compact by the Heine-Borel theorem.
\proofend\end{proof}
\begin{lemma}\label{lem_cont}
When viewed as a function from $\Pi$ to $\R$, $V^\pi_\du{k}(\cdot)$ is continuous. (given Assumption \ref{assumption2})
\end{lemma}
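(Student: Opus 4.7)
My plan is to verify continuity pointwise: fix $\pi \in \Pi$, $h_{<t}$, $k$, and $\epsilon > 0$, and exhibit $\delta > 0$ such that $D(\pi, \pi') < \delta$ implies $|V^\pi_{\du{k}}(h_{<t}) - V^{\pi'}_{\du{k}}(h_{<t})| < \epsilon$. The strategy is a truncation at some time $N$, splitting the defining sum into a ``head'' on which $\pi$ and $\pi'$ agree exactly and a ``tail'' that Assumption \ref{assumption2} keeps small. Since the series $V^\pi_{\du{k}}(h_{<t}) = \sum_{i\geq t} R^\pi(h_{<t})_i \dt{k}{i}$ converges (its value is finite by Assumption \ref{assumption2}), I pick $N$ so that $\sum_{i>N} R^\pi(h_{<t})_i \dt{k}{i} < \epsilon/2$ and set $\delta := e^{-N}$.

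The payoff from this choice of $\delta$ comes from the metric $D$: $D(\pi, \pi') < e^{-N}$ forces $T(\pi, \pi') > N$, meaning $\pi(h_{<s}) = \pi'(h_{<s})$ for every $s \le N$. Unrolling Definitions \ref{defn_rewards} and equation \eqref{eqn-trans-prob} shows that $R^\pi(h_{<t})_i$ depends on $\pi$ only through its values on histories of length at most $i - 1$, so $R^\pi(h_{<t})_i = R^{\pi'}(h_{<t})_i$ whenever $i \leq N$. Hence the first $N - t + 1$ terms of the two series cancel and
\begin{align*}
|V^\pi_{\du{k}}(h_{<t}) - V^{\pi'}_{\du{k}}(h_{<t})| \;\leq\; \sum_{i>N} R^\pi(h_{<t})_i \dt{k}{i} \;+\; \sum_{i>N} R^{\pi'}(h_{<t})_i \dt{k}{i},
\end{align*}
where the first term is already below $\epsilon/2$ by the choice of $N$.

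To handle the second tail I apply the tower identity, exploiting that $\pi$ and $\pi'$ induce identical transition probabilities on every history of length $\leq N$:
\begin{align*}
\sum_{i > N} R^{\pi'}(h_{<t})_i \dt{k}{i} \;=\; \sum_{h_{t:N}} P(h_{t:N} \mid h_{<t}, \pi)\; V^{\pi'}_{\du{k}}(h_{<N+1}).
\end{align*}
When $\du{k}$ is summable this is bounded uniformly in $\pi'$ by $\sum_{i>N}\dt{k}{i}$, which vanishes as $N \to \infty$. In the non-summable case I couple with a concrete ``tail-dominating'' policy $\bar\pi$ that copies $\pi$ on all histories of length $< N$; then $\sum_{h_{t:N}} P(h_{t:N} \mid h_{<t}, \pi)\, V^{\pi'}_{\du{k}}(h_{<N+1})$ is controlled by a quantity of the form $\sum_{h_{<N+1}} P(h_{<N+1} \mid \bar\pi) V^{\bar\pi}_{\du{k}}(h_{<N+1})$, which tends to $0$ by Assumption \ref{assumption2} applied to $\bar\pi$.

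The principal obstacle is this last step: Assumption \ref{assumption2} is only pointwise in $\pi$, yet a uniform bound over the entire $e^{-N}$-ball around $\pi$ is required. The coupling reduces the problem to the tail of a single (though $N$-dependent) policy, which is exactly the form Assumption \ref{assumption2} controls. Care is needed to construct $\bar\pi$ without invoking Theorem \ref{thm_existence}, since that theorem is proved later using this lemma; taking $\bar\pi$ to be any fixed measurable selection of an action at each long history suffices.
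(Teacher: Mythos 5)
Your architecture is the same as the paper's: truncate at a horizon $N$, use $D(\pi,\pi')<e^{-N}$ to cancel the head of the two series exactly (your observation that $R^\pi(h_{<t})_i$ depends on $\pi$ only through histories of length $\le i-1$ is the right justification), and control the two tails via Assumption \ref{assumption2}. For summable $\du{k}$ your uniform tail bound $\sum_{i>N}\dt{k}{i}$ is clean and completely closes the argument, since rewards lie in $[0,1]$.

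The gap is exactly where you locate it, but your fix does not close it. You need a bound on the $\pi'$-tail that is uniform over the ball $B_{e^{-N}}(\pi)$, whereas Assumption \ref{assumption2} is pointwise in the policy. For $\sum_{h_{<N+1}}P(h_{<N+1}\mid\bar\pi)\,V^{\bar\pi}_{\du{k}}(h_{<N+1})$ to dominate the corresponding quantity for \emph{every} $\pi'$ in the ball, $\bar\pi$ would need $V^{\bar\pi}_{\du{k}}(h_{<N+1})\ge V^{\pi'}_{\du{k}}(h_{<N+1})$ for all continuations, i.e.\ $\bar\pi$ must be optimal from time $N+1$ onward; an arbitrary fixed selection gives no domination, and an optimal one is precisely what Theorem \ref{thm_existence} supplies, which you correctly note you may not invoke. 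Even granting domination, $\bar\pi=\bar\pi_N$ varies with $N$, so the pointwise limit in Assumption \ref{assumption2} does not yield $\lim_N\sum_{h_{<N+1}}P(h_{<N+1}\mid\bar\pi_N)V^{\bar\pi_N}_{\du{k}}(h_{<N+1})=0$. The difficulty is not cosmetic: take $\dt{k}{i}=1$ for all $i$ and the deterministic two-action environment in which $r_t=1/2$ if $a_1=\cdots=a_{t-1}=L$ and $a_t=R$, and $r_t=0$ otherwise. Every policy collects at most one nonzero reward, so Assumption \ref{assumption2} holds for every policy; yet the all-$L$ policy has value $0$ while the policy first deviating at time $N+1$ has value $1/2$ and distance $e^{-(N+1)}$ from it, so $V$ is discontinuous there. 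For what it is worth, the paper's own proof commits the same error silently (it applies Assumption \ref{assumption2} to the varying policy $\pi_2$), so your proposal is no worse than the published argument and is more candid about where the difficulty sits; but neither resolves the non-summable case, where the claimed continuity can genuinely fail.
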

\begin{proof}
Suppose $D(\pi_1,  \pi_2) < \exp(-t)$ then $\pi_1$ and $\pi_2$ are identical on all histories up to length $t$.
Therefore
\eqn{
\nonumber |V^{\pi_1}_{\du{k}}(h_{<k}) - V^{\pi_2}_{\du{k}}(h_{<k})| &\leq \du{k}\cdot \left[\Rw^{\pi_1}(h_{<k}) + \Rw^{\pi_2}(h_{<k})\right] \\
\label{eqn10-1} &=\sum_{i=k}^\infty \dt{k}{i} \left(R^{\pi_1}(h_{<k})_i + R^{\pi_2}_i(h_{<k})_i\right).
}
Since $\pi_1$ and $\pi_2$ are identical up to time $t$, (\ref{eqn10-1}) becomes
\eqn{
\nonumber &\sum_{i=t}^\infty \dt{k}{i} \left(R^{\pi_1}(h_{<k})_i + R^{\pi_2}_i(h_{<k})_i\right) = \\
\label{eqn11-1} &\qquad  \sum_{h_{<t}} \left[P(h_{<t} | h_{<k}, \pi_1) V^{\pi_1}_{\du{k}}(h_{<t}) + P(h_{<t} | h_{<k}, \pi_2)V^{\pi_2}_{\du{k}}(h_{<t})| \right]
}
where (\ref{eqn11-1}) follows from the definition of the reward and value functions.
By Assumption \ref{assumption2}, $\lim_{t\to \infty} \sum_{h_{<t}} P(h_{<t}|h_{<k}, \pi_i) V^{\pi_i}_{\du{k}}(h_{<t}) = 0$ for $i \in \left\{1, 2 \right\}$ and so,
$V$ is continuous.
\proofend\end{proof}

\begin{proof}[\proofof Theorem \ref{thm_existence}]
Let $\Pi$ be the space of all policies with the metric of Definition \ref{defn_policy_space}.
By Lemmas \ref{lem_compact}/\ref{lem_cont} $\Pi$ is compact and $V$ is continuous.
Therefore $\argmax_\pi V^\pi_{\du{k}}(h_{<1})$ exists by the extreme value theorem.
\proofend\end{proof}

\section{Table of Notation}

{\small
\begin{tabular}{|l | p{12.5cm}|}
\hline
{\bf Symbol} & {\bf Description} \\
$\d$ & Discount Matrix \\
$\du{k}$ & Discount Vector $k$ \\
$\dt{k}{t}$ & The $t$th component of discount vector $\du{k}$ (at time $k$ reward $r_t$ is discounted by $\dt{k}{t}$) \\
$k$, $t$ & Indices. $k$ usually referring to a discount vector used at fixed time $k$, $t$ usually a time index for states. \\
$i$ & Summing index \\
$\epsilon, \delta$ & Small real numbers greater than zero \\
$\pi, \pi', \pi_i$ & Policies \\
$\Pi$ & The space of all policies \\
$\A, \S, \O, \Re$ & Action, state, reward and observation spaces \\
$R(s, a)$ & The reward given when taking action $a$ in state $s$ \\
$P(s' | s, a)$ & The probability of transitioning to state $s'$ from state $s$ having taken action $a$ \\
$\N, \R$ & The natural and real numbers respectively \\
$\S_t$ & The set of all states reachable at time-step $t$ \\
$\S_{<t}$ & The set of all states reachable up to time-step $t$ \\
$B_\epsilon(\cdot)$ & A ball of radius $\epsilon$ \\
$\Rw^\pi(h_{<t})$ & The expected reward sequence when following $\pi$ from state $h_{<t}$ \\
$\pi^*_\du{k}$ & The optimal policy when using discount vector $\du{k}$ \\
$\pi_\d$ & The mixed policy using discount matrix $\d$ \\
$\pi^*_\d$ & The sub-game equilibrium policy using discount matrix $\d$ \\
$\Rw^*_\du{k}(h_{<t})$ & The expected reward sequence when following the optimal policy $\pi^*_\du{k}$ \\
$V^*_\du{k}(h_{<t})$ & The value of the optimal policy $\pi^*_\du{k}$ \\
$\gamma$ & Discount rate for geometric discounting \\
$\alpha_k$ & A real valued scaling factor on a discount vector \\
$\kappa$ & Discount rate for hyperbolic discounting \\
$h$ & Horizon for constant depth discounting \\
$m$ & Lifespan for fixed lifetime discounting \\
$s, h_{<t}, h_{<t}'$ & States in a Markov decision process \\
$D(\pi_1, \pi_2)$ & The distance between policies $\pi_1$ and $\pi_2$ using the metric of Definition \ref{defn_policy_space} \\
$D(\du{k}, \du{j})$ & The distance measure between discount vectors $\du{k}$ and $\du{j}$ as defined by Definition \ref{defn_vector_distance} \\
\hline
\end{tabular}
}

\end{document}